\DeclareMathOperator*{\argmin}{arg\,min}
\DeclareMathOperator*{\argmax}{arg\,max}
\begin{document}
\mainmatter              

\title{Fast and Bounded Probabilistic Collision Detection in Dynamic Environments for High-DOF Trajectory Planning}

\author{Chonhyon Park \and Jae Sung Park \and Dinesh Manocha}
\institute{University of North Carolina, Chapel Hill, NC 27599, USA,\\
\email{\{chpark,jaesungp,dm\}@cs.unc.edu}}

\maketitle

\begin{abstract}

We present a novel approach to perform probabilistic collision detection between a high-DOF robot and high-DOF obstacles in dynamic, uncertain environments.
In dynamic environments with a high-DOF robot and moving obstacles, our approach efficiently computes accurate collision probability between the robot and obstacles with upper error bounds.
Furthermore, we describe a prediction algorithm for future obstacle position and motion that accounts for both spatial and temporal uncertainties.
We present a trajectory optimization algorithm for high-DOF robots in dynamic, uncertain environments based on probabilistic collision detection.
We highlight motion planning performance in challenging scenarios with robot arms operating in environments with dynamically moving human obstacles.

\end{abstract}

\section{Introduction}

Robots are increasingly being used in living spaces, factories, and outdoor environments. One recent trend has been the development of co-robots (or cobots), robots that are intended to physically interact with humans in a shared workspace. In such environments, various elements or parts of the robot tend to be in close proximity to the humans or other moving objects. 
This proximity gives rise to two kinds of challenges in terms of motion planning. 
First, we have to predict the future actions and reactions of moving obstacles or agents in the environment to avoid collisions with obstacles. 
Therefore, the collision avoidance algorithm needs to deal with uncertain and imperfect representation of future obstacle motions efficiently.
Second, the computed robot motion still needs to be reasonably efficient. It is not desired to compute a very slow or excessively diverting trajectory in order to avoid collisions.

Various uncertainties arise from control errors, sensing errors, or environmental errors (i.e. imperfect environment representation) in the estimation and prediction of environment obstacles.
Typically, these uncertainties are modeled using Gaussian distributions.
Motion planning algorithms use probabilistic collision detection to avoid collisions with the given imperfect obstacle representation.
With such obstacle representations, it can be impossible (Gaussian distributions of obstacle positions have non-zero probabilities in the entire workspace) to compute a perfectly collision-free, or results in an inefficient trajectory to avoid collisions with very low probabilities.
In order to balance the safety and efficiency of planned motions, motion planning under uncertainties is desired to guarantee collision-free of the computed trajectory only in a limited probabiliy bound, which can be specified using a parameter, \emph{confidence level} (e.g. 0.99)~\cite{du2012robot}.

For the probabilistic collision detection, stochastic algorithms are used to approximate the collision probability \cite{blackmore2006probabilistic,lambert2008fast}.
However, such probabilistic collision detection algorithms are computationally intensive and mostly limited to 2D spaces.
Most prior planning approaches for high-DOF robots perform the exact collision checking with scaled objects that enclose the potential object volumes~\cite{bry2011rapidly,van2012lqg,lee2013sigma,sun2015stochastic}. Although these approaches guarantee probabilistical safety bounds, they highly overestimate the collision probability, which result in less optimal trajectories or failure to finding feasible trajectories in a limited planning time in dynamic environments.

\noindent {\bf Main Results:} 
In this paper, we present a novel approach to perform probabilistic collision detection.
Our approach has two novel contributions. First, we provide a fast approximation of collision probability between the high-DOF robot and high-DOF obstacles. Our approach computes more accurate probabilities than approaches using exact collision checking with enlarged obstacle shapes, and the computed probability is guaranteed as the upper bound that the actual probability is always lower than the computed probability.
Second, we describe a practical belief space estimation algorithm that accounts for both spatial and temporal uncertainties in the position and motion of each obstacle in dynamic environments with moving obstacles.

We present a trajectory optimization algorithm for high-DOF robots in dynamic, uncertain environments based on our probabilistic collision detection and belief space estimation.
We have evaluated our planner using robot arms operating in a simulation and a real environment workspace with high-resolution point cloud data corresponding to moving human obstacles, captured using a Kinect. 
Our approach uses a high value of the confidence level ($0.95$ or above) to perform probabilistic collision detection and can compute a smooth collision-free trajectory.

The paper is organized as follows.
Section~\ref{sec:related} gives a brief overview of prior work on probabilistic collision detection and motion planning.
We introduce the notation and the algorithm of our probabilistic collision detection algorithm in Section~\ref{sec:pcc}.
We describe the belief space estimation and trajectory planning algorithm in Section~\ref{sec:environment} and Section~\ref{sec:optimization}, respectively. 
We highlight planning performance in challenging human environment scenarios in Section~\ref{sec:result}.

\section{Related Work}
\label{sec:related}

In this section, we give a brief overview of prior work on probabilistic collision detection, trajectory planning, and uncertainty handling.

\subsection{Probabilistic Collision Detection}

Collision checking is an integral part of any motion planning algorithm and most prior techniques assume an exact representation of the robot and obstacles. Given some uncertainty or imperfect representation of the obstacles, the resulting algorithms perform probabilistic collision detection.
Typically, these uncertainties are modeled using Gaussian distributions and stochastic algorithms are used to approximate the collision probability \cite{blackmore2006probabilistic,lambert2008fast}.
In stochastic algorithms, a large number of sample evaluations are required to compute the accurate collision probability.

If it can be assumed that the size of the objects is relatively small, the collision probability can be approximated using the probability at a single configuration corresponds to the mean of the probability distribution function (PDF), which provides a closed-form solution~\cite{du2011probabilistic}. This approximation is fast, but the computed probability cannot provide a bound, and can be either higher or lower than the actual probability, where the error increases as the object is bigger and has high-DOFs.

For high-dimensional spaces, a common approach for checking collisions for imperfect or noisy objects is to perform the exact collision checking with scaled objects that enclose the potential object volumes~\cite{van2012lqg,Park:2012:ICAPS}. Prior approaches generally enlarge an object shape, which may correspond to a robot or an obstacle, to compute the space occupied by the object for a given standard deviation. This may correspond to a sphere~\cite{bry2011rapidly} or a ‘sigma hull’~\cite{lee2013sigma}.
This approach provides an upper bounding volume for the given confidence level. However, the computed volume overestimates the probability and can be much bigger than the actual volume corresponds to the confidence level, which can cause failure of finding existing feasible trajectories in motion planning.

Many other approaches have been proposed to perform probabilistic collision detection on point cloud data.
Bae et al.~\cite{bae2009closed} presented a closed-form expression for the positional uncertainty of point clouds.
Pan et al.~\cite{pan2011probabilistic} reformulate the probabilistic collision detection problem as a classification problem and compute per point collision probability. However, these approaches assume that the environment is static. Other techniques are based on broad phase data structures that handle large point clouds for realtime collision detection~\cite{pan2013real}.

\subsection{Planning in Dynamic Environments}
There is considerable literature on motion planning in dynamic scenes. In some applications, the future locations or trajectories of the obstacles are known. As a result, the time dimension is added to the configuration space of the robot and classical motion planning algorithms can be applied to the resulting state space~\cite{LaValle:2006}.
In many scenarios, the future positions of the obstacles are not known. As a result, the planning problem is typically solved locally using reactive techniques such as dynamic windows or velocity obstacles~\cite{Fiorini:1998}, or assuming that the obstacle trajectories are known within a short horizon~\cite{Likhachev:2009}.
Other methods use replanning algorithms, which interleave planning with execution. These methods include sampling-based planners~\cite{Hauser:safety,David:2002,SMP:2005}, grid searches~\cite{Koenig:2003:PBP,Likhachev05anytimedynamic}, or trajectory optimization~\cite{Park:2012:ICAPS}. Our formulation is based on optimization-based replanning, and we  take into account smoothness and dynamic constraints.

Applications that require high responsiveness use control-based approaches~\cite{haschke2008line,kroger2010online}, which can compute trajectories in realtime.
They compute the robot trajectory in the Cartesian space, i.e. the workspace of the robot, according to the sensor data. However, the mapping from the Cartesian trajectory to the trajectory in the configuration space of high-DOF robots can be problematic. Furthermore, control-based approaches tend to compute less optimal robot trajectories as compared to the planning approaches that incorporate the estimation of the future obstacle poses. Planning algorithms can compute better robot trajectories in applications in which a good prediction about obstacle motions in a short horizon can be provided.

\subsection{Planning under Uncertainties}

The problem of motion planning under uncertainty, or belief space planning, has been an active area of research for the last few decades. The main goal is to plan a path for a robot in partially-observable state spaces. The underlying problem is formally defined using  POMDPs (partially-observable Markov decision processes), which provide a mathematically rigorous and general approach for planning under uncertainty~\cite{kaelbling1998planning}.
The resulting POMDP planners handle the uncertainty by reasoning over the {\em belief} space. A belief corresponds to the probability distribution over all possible states. However, The POMDP formulation is regarded as computationally intractable~\cite{papadimitriou1987complexity} for problems which are high-dimentional or have a large number of actions. 
Therefore, many efficient approximations~\cite{silver2010monte,kurniawati2013online,somani2013despot} and parallel techniques~\cite{shani2010evaluating,lee2013gpu} have been proposed to provide a better estimation of belief space.

Most approaches for continuous state spaces use Gaussian belief spaces, which are estimated using Bayesian filters (e.g., Kalman filters)~\cite{leung2006planning,platt2010belief}.
Algorithms using Gaussian belief spaces have also been proposed for the motion planning of high-DOF robots~\cite{van2012lqg,sun2015stochastic}, but they do not account for environment uncertainty or imperfect obstacle information.
Instead, most planning algorithms handling environment uncertainty deal with issues arising from visual occlusions~\cite{missiuro2006adapting,guibas2010bounded,kahn2015active,charrow2015information}. In terms of dynamic environments, motion planning with uncertainty algorithms is mainly limited to 
2D spaces~\cite{du2012robot,bai2015intention}, where the robots are modeled as circles, or to specialized applications such as people tracking~\cite{bandyopadhyay2009motion}. 

\section{Probabilistic Collision Detection}
\label{sec:pcc}

In this section, we first introduce the notation and terminology used in the paper and present our probabilistic collision checking algorithm between the robot and the environment.

\subsection{Notation and Assumptions}
\label{subsec:notation}

Our goal is to compute a collision probability between a high-DOF robot configuration and a given obstacle representation of dynamic environments, where the obstacle representation is a probability distribution which accounts uncertainties in the future obstacle motion prediction.

For an articulated robot with $D$ one-dimensional  joints, we represent a single robot configuration as $\mathbf q$, which is a vector composed from the joint values.
The $D$-dimensional vector space of $\mathbf q$ is the configuration space $\mathcal{C}$ of the robot.
We denote the collision-free subset of $\mathcal{C}$ as $\mathcal{C}_{free}$, and the other configurations corresponding to collisions as $\mathcal{C}_{obs}$.

We assume that the robot consists of $J$ links  $R_1,...,R_J$, where $J \leq D$. 
Furthermore, for each robot link $R_j$, we use multiple bounding volumes $B_{j1},...,B_{jK}$ to tightly enclose $R_j(\mathbf q)$ which corresponds to a robot configuration $\mathbf q$, i.e., 
\begin{equation}
\label{eq:robot_bv}
\begin{split}
\forall j : R_j(\mathbf q) \subset \bigcup_{k=1}^K B_{jk}(\mathbf q) \,\,\textrm{for}\,\, (1 \le j \le J).
\end{split}
\end{equation}
In our experiments, bounding spheres are automatically generated along the medial axis of each robot link.

We represent $L$ obstacles in the environment as $O_l \, (1 \le l \le L)$, and assume that the obstacles undergo rigid motion. The configuration of these obstacles is specified based on geometric (shape) representation and their poses. 
As is the case for the robot, we use the bounding volumes $S_{l1},...,S_{lM}$ to enclose each obstacle $O_l$ in the environment:
\begin{equation}
\label{eq:obs_bv}
\begin{split}
\forall l : O_l \subset \bigcup_{m=1}^M S_{lm} \,\,\textrm{for}\,\, (1 \le l \le L).
\end{split}
\end{equation}
For dynamic obstacles, we assume the predicted position of a bounding volume $S_{lm}$ at time $t$ is estimated as a Gaussian distribution $\mathcal{N} (\mathbf p_{lm}, \mathbf \Sigma_{lm})$, which will be described in Section~\ref{sec:environment}.

\subsection{Probabilistic Collision Checking}
\label{subsec:pcc}

The collision probability between a robot configuration $\mathbf q_i$ with the environment at time $t_i$, $P(\mathbf q_i \in \mathcal{C}_{obs}(t_i))$ can be formulated as
\begin{equation}
\label{eq:colspace}
\begin{split}
P\left(\left(\bigcup_j \bigcup_k  B_{jk}(\mathbf q_i)\right) \bigcap \left(\bigcup_l \bigcup_m S_{lm}(t_i) \right) \neq \emptyset \right).
\end{split}
\end{equation}
We assume the robot links $R_j$ and obstacles $O_l$ are independent with each other link or obstacle, as their positions depend on corresponding joint values or obstacle states. Then (\ref{eq:colspace}) can be computed as
\begin{align}
\label{eq:colprob}
&P(\mathbf q_i \in \mathcal C_{obs}(t_i)) =1-\prod_j\prod_l \overline{P_{col}(i,j,l) },
\end{align}
where $P_{col}(i,j,l)$ is the collision probability between $R_j(\mathbf q_i)$ and obstacles $O_l(t_i)$. Since positions of bounding volumes $B_{jk}$ and  $S_{lm}$ are determined by joint values or obstacle states of the corresponding robot link or obstacle, bounding volumes for the same object are dependant with each other, and $P_{col}(i,j,l)$ can be approximated as
\begin{align}
\label{eq:colprob2}
&P_{col}(i,j,l)\approx\max_{k,m} P_{col}(i,j,k,l,m)\\
&P_{col}(i,j,k,l,m)=P(B_{jk}(\mathbf q_i)\cap S_{lm}(t_i) \neq \emptyset),
\end{align}
where $P_{col}(i,j,k,l,m)$ denotes the collision probability between $B_{jk}(\mathbf q_i)$ and $S_{lm}(t_i)$.

\begin{figure}[t]
  \centering
  \includegraphics[trim=0in 0in 0in 1.0in, clip=true, width=0.6\linewidth]{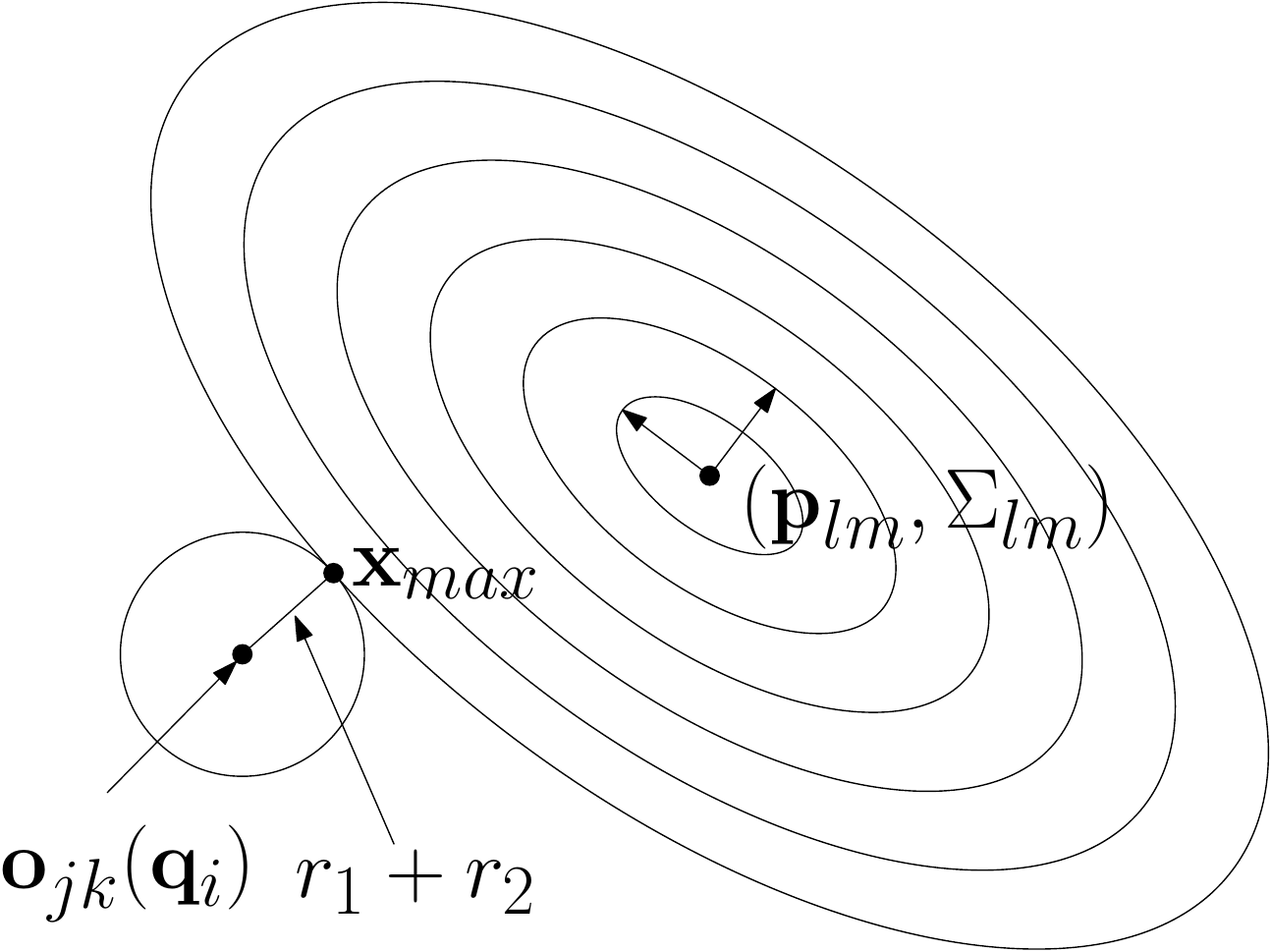}
  \caption{Approximation of probabilistic collision detection between a sphere obstacle of radius $r_2$ with a probability distribution $\mathcal{N} (\mathbf p_{lm}, \mathbf \Sigma_{lm})$ and a rigid sphere robot $B_{jk}(\mathbf q_i)$ centered at $\mathbf o_{jk}(\mathbf q_i)$ with radius $r_1$. It is computed as the product of the probability at $\mathbf x_{max}$ with the volume of the sphere with the radius computed as the sum of two radii, $V=\frac{4\pi}{3}(r_1+r_2)^3$.}
  \label{fig:prob_approx}
\end{figure}

Fig.~\ref{fig:prob_approx} illustrates how $ P_{col}(i,j,k,l,m)$ can be computed for $S_{lm}(t_i) \sim \mathcal{N} (\mathbf p_{lm}, \mathbf \Sigma_{lm})$.
If we assume that the robot's bounding volume $B_{jk}(\mathbf q_i)$ is a sphere centered at $\mathbf o_{jk}(\mathbf q_i)$, similar to the environment bounding volume $S_{lm}$, and denote the radii of $B_{jk}$ and $S_{lm}$ as $r_1$ and $r_2$, respectively, the exact probability of collision between them is given as:
\begin{equation}
\label{eq:colobj}
\begin{split}
P_{col}(i,j,k,l,m)=\int_{\mathbf x}I(\mathbf x,\mathbf o_{jk}(\mathbf q_i))p(\mathbf x,\mathbf p_{lm},\mathbf \Sigma_{lm})d \mathbf x,\\
\end{split}
\end{equation}
where the indicator function $I(\mathbf x,\mathbf o)$ and the obstacle function $p(\mathbf x,\mathbf p,\mathbf \Sigma)$ are defined as,
\begin{align}
\label{eq:colobj1}
I(\mathbf x,\mathbf o)=\left\{\begin{matrix}
1 & \textrm{if}\: \|\mathbf x - \mathbf o\| \leq (r_1+r_2) \\ 
0 & \textrm{otherwise}
\end{matrix}\right. \, \textrm{and}
\end{align}
\begin{align}
\label{eq:colobj2}
p(\mathbf x,\mathbf p,\mathbf \Sigma)=\frac{e^{-0.5(\mathbf x-\mathbf p)^T\mathbf \Sigma^{-1}(\mathbf x-\mathbf p)}}{\sqrt{(2\pi)^3\|\mathbf \Sigma\|}},
\end{align}
respectively.
It is known that there is no closed form solution for (\ref{eq:colobj}). 
Toit and Burdick approximate (\ref{eq:colobj}) as $V \cdot p(\mathbf o_{jk}(\mathbf q_i),\mathbf p_{lm},\mathbf \Sigma_{lm})$, where $V$ is the volume of sphere, i.e., $V=\frac{4\pi}{3}(r_1+r_2)^3$~\cite{du2011probabilistic}.
However, this approximated probability can be either smaller or larger than the exact probability. If the covariance $\mathbf \Sigma_{lm}$ is small, the approximated probability can be much smaller than the exact probability.
In order to compute an upper bound  on the collision probability, we compute $\mathbf x_{max}$, the position has the maximum probability of  $\mathcal{N} (\mathbf p_{lm}, \mathbf \Sigma_{lm})$ in $\mathbf B_{jk}(\mathbf q_i)$, and compute the upper bound of $P_{col}(i,j,k,l,m)$ as 
\begin{align}
\label{eq:approx}
P_{approx}(i,j,k,l,m) = V \cdot  p(\mathbf x_{max},\mathbf p_{lm},\mathbf \Sigma_{lm}).
\end{align}
Although $\mathbf x_{max}$ has no closed-form solution, it can be computed efficiently.
\begin{lemma}
\label{thm:lemmamax}
$\mathbf x_{max}$, the position has the maximum probability of  $\mathcal{N} (\mathbf p_{lm}, \mathbf \Sigma_{lm})$ in $\mathbf B_{jk}(\mathbf q_i)$, is formulated as an one-dimensional search of a parameter $\lambda$,
\begin{align}
\label{eq:lemma}
\mathbf x_{max}&=\left\{ \mathbf x|\|\mathbf x -\mathbf o_{jk}(\mathbf q_i)\|=(r_1+r_2) \,\textrm{and}\, \mathbf x \in \mathbf x(\lambda)\right\}, \textrm{where}\\
\mathbf x(\lambda)&=(\mathbf \Sigma_{lm}^{-1}+\lambda \mathbf I)^{-1}(\mathbf \Sigma_{lm}^{-1}\mathbf p_{lm}+\lambda \mathbf o_{jk}(\mathbf q_i)).
\end{align}
\end{lemma}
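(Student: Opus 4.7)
The plan is to reformulate the problem as a constrained optimization and apply the KKT conditions. Maximizing the Gaussian density $p(\mathbf{x},\mathbf{p}_{lm},\mathbf\Sigma_{lm})$ over $\mathbf{B}_{jk}(\mathbf{q}_i)$ is equivalent, by monotonicity of $-\log$, to minimizing the Mahalanobis-type quadratic
\[
f(\mathbf{x})=\tfrac{1}{2}(\mathbf{x}-\mathbf{p}_{lm})^{T}\mathbf\Sigma_{lm}^{-1}(\mathbf{x}-\mathbf{p}_{lm})
\]
subject to $g(\mathbf{x})=\tfrac{1}{2}\|\mathbf{x}-\mathbf{o}_{jk}(\mathbf{q}_i)\|^{2}-\tfrac{1}{2}(r_1+r_2)^{2}\le 0$. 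Since $\mathbf\Sigma_{lm}$ is positive definite, $f$ is strictly convex, and the feasible set is a closed ball, so a unique minimizer exists.

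First I would argue that under the relevant regime $\mathbf{p}_{lm}\notin \mathbf{B}_{jk}(\mathbf{q}_i)$ (otherwise the unconstrained maximizer $\mathbf{p}_{lm}$ lies inside the ball and the collision probability is effectively one, so this case is not of interest for the bound). In this regime the inequality constraint must be active at the optimum, because moving the current iterate radially toward $\mathbf{p}_{lm}$ along the line from $\mathbf{o}_{jk}(\mathbf{q}_i)$ would strictly decrease $f$ if any slack remained; hence $\|\mathbf{x}_{max}-\mathbf{o}_{jk}(\mathbf{q}_i)\|=r_1+r_2$, which is the first part of the claim.

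Next I would write the Lagrangian
\[
L(\mathbf{x},\lambda)=\tfrac{1}{2}(\mathbf{x}-\mathbf{p}_{lm})^{T}\mathbf\Sigma_{lm}^{-1}(\mathbf{x}-\mathbf{p}_{lm})+\tfrac{\lambda}{2}\bigl(\|\mathbf{x}-\mathbf{o}_{jk}(\mathbf{q}_i)\|^{2}-(r_1+r_2)^{2}\bigr),
\]
and impose stationarity $\nabla_{\mathbf{x}}L=0$, which gives $\mathbf\Sigma_{lm}^{-1}(\mathbf{x}-\mathbf{p}_{lm})+\lambda(\mathbf{x}-\mathbf{o}_{jk}(\mathbf{q}_i))=\mathbf 0$. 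Collecting terms in $\mathbf{x}$ yields $(\mathbf\Sigma_{lm}^{-1}+\lambda \mathbf{I})\mathbf{x}=\mathbf\Sigma_{lm}^{-1}\mathbf{p}_{lm}+\lambda\,\mathbf{o}_{jk}(\mathbf{q}_i)$, and since $\mathbf\Sigma_{lm}^{-1}+\lambda \mathbf{I}$ is invertible for every $\lambda\ge 0$, this is exactly the parametric form $\mathbf{x}(\lambda)$ in the statement. Combined with the active-constraint identity above, this identifies $\mathbf{x}_{max}$ as the intersection of the curve $\{\mathbf{x}(\lambda):\lambda\ge 0\}$ with the sphere of radius $r_1+r_2$ around $\mathbf{o}_{jk}(\mathbf{q}_i)$, reducing the search to one dimension in $\lambda$.

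The step I expect to be most delicate is justifying that the one-dimensional sweep in $\lambda$ unambiguously selects $\mathbf{x}_{max}$: the correspondence $\lambda\mapsto\mathbf{x}(\lambda)$ interpolates between the unconstrained minimizer $\mathbf{p}_{lm}$ (at $\lambda=0$) and $\mathbf{o}_{jk}(\mathbf{q}_i)$ (as $\lambda\to\infty$), and one should check, via the eigendecomposition of $\mathbf\Sigma_{lm}$, that $\lambda\mapsto\|\mathbf{x}(\lambda)-\mathbf{o}_{jk}(\mathbf{q}_i)\|$ is continuous and monotonically decreasing in $\lambda$, so that a unique $\lambda^{*}\ge 0$ yields $\|\mathbf{x}(\lambda^{*})-\mathbf{o}_{jk}(\mathbf{q}_i)\|=r_1+r_2$. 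Once that monotonicity (a standard trust-region fact) is in hand, uniqueness of the minimizer of the strictly convex $f$ on the ball forces $\mathbf{x}_{max}=\mathbf{x}(\lambda^{*})$, completing the proof.
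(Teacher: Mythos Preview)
Your proposal is correct and follows essentially the same approach as the paper: both convert the density maximization to a quadratic minimization, form the Lagrangian with multiplier $\lambda$, set the gradient to zero, and solve the resulting linear system to obtain $\mathbf{x}(\lambda)=(\mathbf\Sigma_{lm}^{-1}+\lambda\mathbf I)^{-1}(\mathbf\Sigma_{lm}^{-1}\mathbf p_{lm}+\lambda\mathbf o_{jk})$. Your write-up is in fact more careful than the paper's, since you explicitly justify why the constraint is active (so $\mathbf x_{max}$ lies on the sphere) and why the one-dimensional sweep in $\lambda$ yields a unique intersection via the trust-region monotonicity argument, points the paper leaves implicit.
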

\begin{proof} 
The problem of finding the position with the maximum probability in a convex region can be formulated as an optimization problem with a Lagrange multiplier $\lambda$~\cite{groetsch1984theory},
\begin{align}
\label{eq:lemmaproof1}
\mathbf x_{max} = \argmin_{\mathbf x} \left\{ (\mathbf x-\mathbf p_{lm})^T \mathbf \Sigma_{lm}^{-1}(\mathbf x - \mathbf p_{lm})+\lambda (\mathbf x - \mathbf o_{jk})^2\right\}. 
\end{align}
The solution of (\ref{eq:lemmaproof1}) satisfies
\begin{align}
\label{eq:lemmaproof2}
&\triangledown  \left\{ (\mathbf x-\mathbf p_{lm})^T \mathbf \Sigma_{lm}^{-1}(\mathbf x - \mathbf p_{lm})+\lambda (\mathbf x - \mathbf o_{jk})^2\right\}=0,
\end{align}
and can be computed as
\begin{align}
\label{eq:lemmaproof3}
&2\mathbf \Sigma_{lm}^{-1}(\mathbf x - \mathbf p_{lm})+2\lambda(\mathbf x-\mathbf o_{jk})=0\\
&\mathbf x = (\mathbf \Sigma_{lm}^{-1}+\lambda \mathbf I)^{-1})(\mathbf \Sigma_{lm}^{-1}\mathbf p_{lm} + \lambda \mathbf o_{jk}).
\end{align}
\end{proof}

The approximated probability (\ref{eq:approx}) is guaranteed as an upper bound of the exact collision probability (\ref{eq:colobj}).
\begin{theorem}
\label{thm:approx}
The approximated probability $P_{approx}(i,j,k,l,m)$ (\ref{eq:approx}) is always greater or equal to the exact collision probability $P_{col}(i,j,k,l,m)$ (\ref{eq:colobj}).
\end{theorem}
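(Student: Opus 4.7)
The plan is to show the claim by the elementary bound that integrating a function over a region is at most the supremum of the function on that region times the volume of the region. All the hard work has already been done in Lemma~\ref{thm:lemmamax}, which characterizes $\mathbf{x}_{max}$ as the maximizer of the Gaussian density $p(\cdot, \mathbf{p}_{lm}, \mathbf{\Sigma}_{lm})$ over the closed ball $B_{jk}(\mathbf{q}_i)$, so the present theorem reduces to a one-line estimate once this is invoked.

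First I would rewrite $P_{col}(i,j,k,l,m)$ from (\ref{eq:colobj}) by using the definition of the indicator $I$ in (\ref{eq:colobj1}) to restrict the domain of integration, obtaining
\begin{equation*}
P_{col}(i,j,k,l,m) = \int_{B_{jk}(\mathbf{q}_i)} p(\mathbf{x}, \mathbf{p}_{lm}, \mathbf{\Sigma}_{lm})\, d\mathbf{x},
\end{equation*}
where $B_{jk}(\mathbf{q}_i) = \{\mathbf{x} : \|\mathbf{x} - \mathbf{o}_{jk}(\mathbf{q}_i)\| \le r_1 + r_2\}$ is a closed ball of volume $V = \tfrac{4\pi}{3}(r_1+r_2)^3$. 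Next, since by Lemma~\ref{thm:lemmamax} the point $\mathbf{x}_{max}$ attains the maximum of $p(\cdot, \mathbf{p}_{lm}, \mathbf{\Sigma}_{lm})$ on $B_{jk}(\mathbf{q}_i)$, the pointwise inequality $p(\mathbf{x}, \mathbf{p}_{lm}, \mathbf{\Sigma}_{lm}) \le p(\mathbf{x}_{max}, \mathbf{p}_{lm}, \mathbf{\Sigma}_{lm})$ holds for every $\mathbf{x} \in B_{jk}(\mathbf{q}_i)$.

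Integrating this inequality over $B_{jk}(\mathbf{q}_i)$ and pulling the constant $p(\mathbf{x}_{max}, \mathbf{p}_{lm}, \mathbf{\Sigma}_{lm})$ outside the integral immediately gives
\begin{equation*}
P_{col}(i,j,k,l,m) \;\le\; p(\mathbf{x}_{max}, \mathbf{p}_{lm}, \mathbf{\Sigma}_{lm}) \int_{B_{jk}(\mathbf{q}_i)} d\mathbf{x} \;=\; V \cdot p(\mathbf{x}_{max}, \mathbf{p}_{lm}, \mathbf{\Sigma}_{lm}) \;=\; P_{approx}(i,j,k,l,m),
\end{equation*}
which is the desired bound.

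There is essentially no obstacle in the proof itself; the only subtlety worth flagging is to justify that Lemma~\ref{thm:lemmamax} indeed yields a global maximizer of the density over the closed ball (not merely a critical point of the Lagrangian). This follows because the Gaussian density is continuous and the ball is compact, so a maximum exists; either it lies in the interior, in which case it equals the unconstrained maximizer $\mathbf{p}_{lm}$ which already lies in the ball (corresponding to $\lambda = 0$), or it lies on the boundary $\|\mathbf{x} - \mathbf{o}_{jk}(\mathbf{q}_i)\| = r_1 + r_2$, which is exactly the case treated by the Lagrange formulation in Lemma~\ref{thm:lemmamax}. With this observation the pointwise domination used above is legitimate and the proof is complete.
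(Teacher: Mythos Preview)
Your proof is correct and follows essentially the same route as the paper: both invoke Lemma~\ref{thm:lemmamax} to obtain the pointwise bound $p(\mathbf{x},\mathbf{p}_{lm},\mathbf{\Sigma}_{lm}) \le p(\mathbf{x}_{max},\mathbf{p}_{lm},\mathbf{\Sigma}_{lm})$ on the ball of radius $r_1+r_2$, then integrate to replace the integral by $V$ times the maximum density. Your additional remark justifying that Lemma~\ref{thm:lemmamax} indeed yields a global maximizer (via compactness and the interior/boundary dichotomy) is a welcome clarification that the paper leaves implicit.
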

\begin{proof} 
$p(\mathbf x_{max},\mathbf p_{lm},\mathbf \Sigma_{lm}) \ge p(\mathbf x,\mathbf p_{lm},\mathbf \Sigma_{lm})$ for $\{\mathbf x|\|\mathbf x - \mathbf o_{jk}(\mathbf q_i)\| \leq (r_1+r_2)\}$ from Lemma~\ref{thm:lemmamax}.
Therefore,
\begin{align}
\label{eq:theorem3}
P_{approx}(i,j,k,l,m)&=V \cdot p(\mathbf x_{max},\mathbf p_{lm},\mathbf \Sigma_{lm}) \\
&=  \int_{\mathbf x}I(\mathbf x,\mathbf o_{jk}(\mathbf q_i))d \mathbf x \cdot p(\mathbf x_{max},\mathbf p_{lm},\mathbf \Sigma_{lm})\\
&=\int_{\mathbf x}I(\mathbf x,\mathbf o_{jk}(\mathbf q_i))\cdot p(\mathbf x_{max},\mathbf p_{lm},\mathbf \Sigma_{lm})d \mathbf x \\
&\ge \int_{\mathbf x}I(\mathbf x,\mathbf o_{jk}(\mathbf q_i))\cdot p(\mathbf x,\mathbf p_{lm},\mathbf \Sigma_{lm})d \mathbf x \\
&=P_{col}(i,j,k,l,m).
\end{align}
\end{proof}

\subsection{Comparisons with Other Algorithms}

\begin{figure}[ht]
  \centering
  \subfloat[Case I]
  {
    \includegraphics[width=0.2\linewidth]{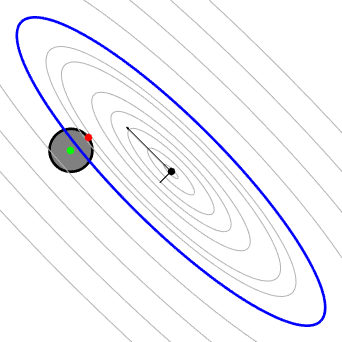}
  }
  \subfloat[Case II]
  {
    \includegraphics[width=0.2\linewidth]{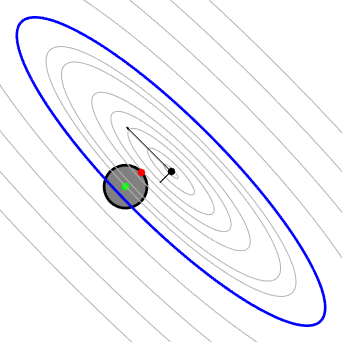}
  }
  \begin{tabular}{|c|p{1.5cm}|p{1.5cm}|}
  \hline
  \multirow{2}{*}{Algorithms}& \multicolumn{2}{|c|}{Collision probability}\\
  \cline{2-3}
  & \multicolumn{1}{c|}{Case I} & \multicolumn{1}{c|}{Case II}\\ \hline
  Numerical integration & \multicolumn{1}{r|}{0.09\%(O)} & \multicolumn{1}{r|}{1.72\%(X)}\\ \hline
  \begin{tabular}[x]{@{}c@{}}Enlarged bounding volumes\\  ($\delta_{CL} = 0.99$)~\cite{van2012lqg,Park:2012:ICAPS}\end{tabular}& \multicolumn{1}{r|}{100.00\%(X)} & \multicolumn{1}{r|}{ 100.00\%(X)} \\ \hline
  \begin{tabular}[x]{@{}c@{}}Approximation using\\ the center point PDF~\cite{du2011probabilistic}\end{tabular}  &  \multicolumn{1}{r|}{0.02\%(O)} & \multicolumn{1}{r|}{0.89\%(O)} \\ \hline
  Our approach &  \multicolumn{1}{r|}{0.80\%(O)} & \multicolumn{1}{r|}{8.47\%(X)} \\ \hline
  \end{tabular}

  \caption{{\bf Comparison of approximated collision probabilities for feasible (Case I) and infeasible (Case II) scenarios for $\delta_{CL}=0.99$:}
           We compare the exact collision probability (computed using numerical integration) with approximated probabilities of 1) enlarged bounding volumes (blue contour)~\cite{van2012lqg,Park:2012:ICAPS}, 2) approximation using object center point (in green)~\cite{du2011probabilistic}, and 3) our approach that uses the maximum probability point (in red). Our approach guarantees to not underestimate the probability, while the approximated probability is close to the exact probability. }
\label{fig:pcc_comparison}    
\end{figure}

In Fig.~\ref{fig:pcc_comparison}, we illustrate two cases of the collision probability computation between a circle $B$ (in gray), and a point (in black) $\mathbf x$ which has uncertainties, $\mathbf x \sim (\mathbf p, \mathbf \Sigma)$, in 2D.
We evaluate the exact collision probabilities using the numerical integration of the PDF. The collision probability of Case I is $0.09\%$, which is feasible with $\delta_{CL} = 0.99$, while the probability of Case II is $1.72\%$, which is infeasible.
Contours represent the bounds for different confidence levels, where the blue contour corresponds to $\delta_{CL}=0.99$.  
In both cases, the blue contour intersects with $B$ and approaches that use enlarged bounding volumes~\cite{van2012lqg,Park:2012:ICAPS} determine the objects are collide, while the collision probability for Case I is $0.09\%$.

Du Toit and Burdick~\cite{du2011probabilistic} used the probability of the center point (shown in green in Fig.~\ref{fig:pcc_comparison}) to compute a collision probability that is close to the actual value. However, their approach cannot guarantee upper bounds, and the approximated probability can significantly smaller from the actual probability if the covariance value is small. Case II in Fig.~\ref{fig:pcc_comparison} shows that the approximated probability is $0.89\%$, that satisfies the safety with the $\delta_{CL} = 0.99$, which is not true for  the exact probability $1.72\%$.

Unlike~\cite{du2011probabilistic}, we approximate the probability of the entire volume using the maximum probability value of a single point (shown in red in Fig.~\ref{fig:pcc_comparison}), as described in Section~\ref{subsec:pcc}. Our approach guarantees computation of the upper bound of collision probability, while the approximated probability is close to the exact probability than the enlarged bounding volume approaches.

\section{Belief State Estimation}
\label{sec:environment}

In this section, we describe our approach for computing the current state $\mathbf p$ of environment obstacles, and use that to estimate the current belief state $\mathbf b_t$ and future states $\mathbf b_i \: (i>t)$, which are represented as the probability distributions.
We construct or update the belief state of the environment $\mathbf b = (\mathbf p, \mathbf \Sigma)$ using means and covariances $\mathbf p_{ij}$ and $\mathbf\Sigma_{ij}$ of the poses of the existing bounding volumes $S_{ij}$. That is, $\mathbf p = \begin{bmatrix} \mathbf p_{11}^T & ... & \mathbf p_{lm}^T \end{bmatrix} ^T$ and $\mathbf\Sigma = \textrm{diag}(\mathbf\Sigma_{11},...,\mathbf\Sigma_{lm})$, where $\mathbf \Sigma$ is a block diagonal matrix of the covariances. 

\subsection{Environment State Model}
\label{subsec:env_state}

In order to compute reliable obstacle motion trajectories in dynamic environments, first  it is important to gather the state of obstacles using sensors. There is considerable work on pose recognition in humans~\cite{plagemann2010real,shotton2013real} or non-human objects~\cite{lepetit2005randomized} in computer vision and related areas.

\begin{figure}[ht]
  \centering
  \includegraphics[width=0.5\textwidth]{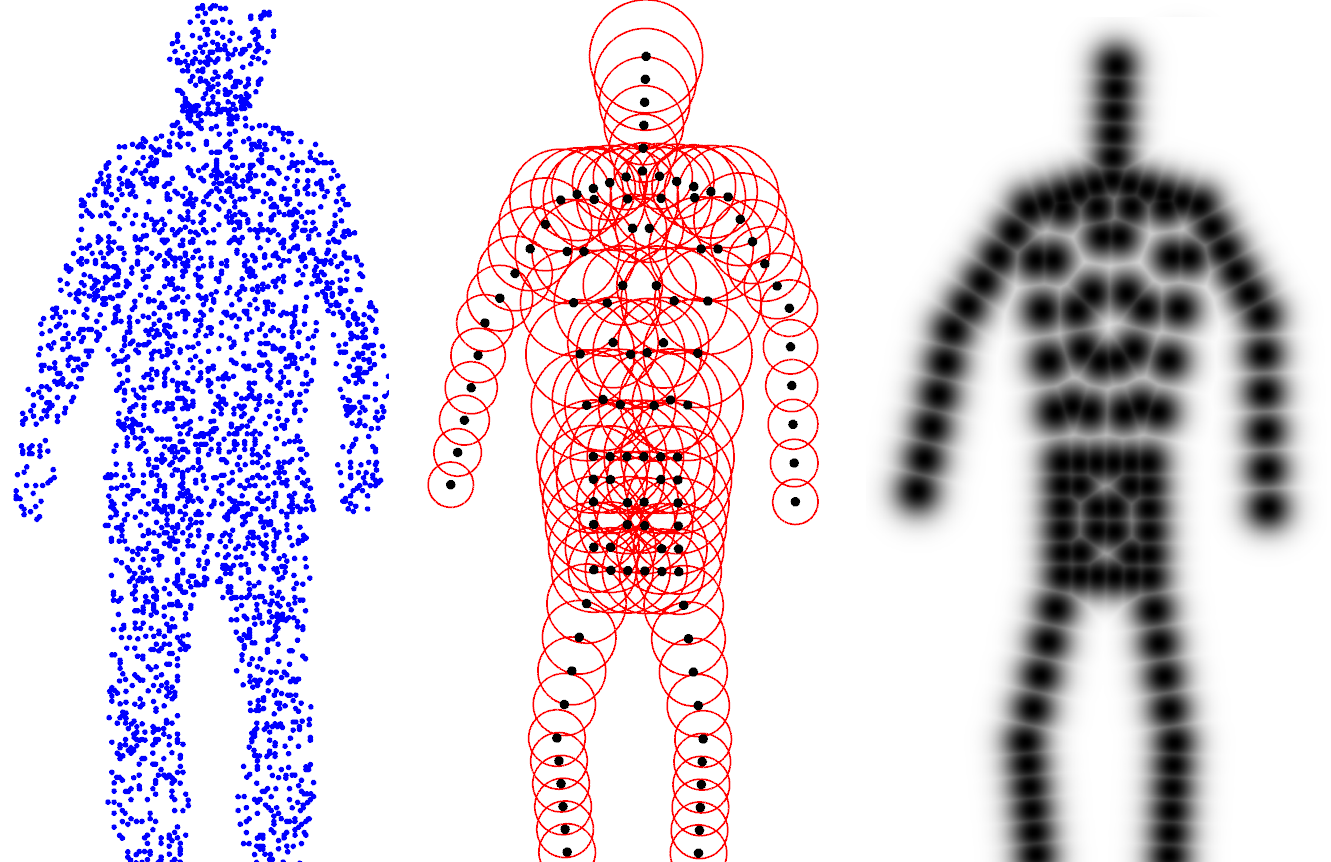}
  \caption{{\bf Environment belief state estimation for a human obstacle:} We approximate the point cloud from the sensor data using bounding volumes. The shape of bounding volumes are pre-known in the database, and belief states are defined on the probability distributions of bounding volume poses:
  (a) input point clouds (blue dots) (b) the bounding volumes (red spheres)with their mean positions (black dots) (c) the probabilistic distribution of mean positions. 0\% confidence level (black) to 100\% confidence level (white).}
  \label{fig:ho}
\end{figure}

We assume that a model database is given that consists of pre-defined shape models for each moving obstacle in the environment; e.g., an obstacle may correspond to a known shape such as a ball or a human arm.  Furthermore, we are also given a bounding volume approximation of each such model.  In particular, we use spheres as the underlying bounding volumes (Fig.~\ref{fig:ho}), as they provide an efficient approximation for computing the collision probability (see Section~\ref{subsec:pcc}). 

We segment out the background pixels correspond to the known static environments from the depth map, and generate a point cloud which is used to compute the best approximating environment state $\mathbf{p}^*$.
It can be computationally inefficient to estimate and predict the states of dynamic obstacles that are represented using a large number of point clouds. Therefore, we use a reduced environment state representation that is defined in terms of the positions and velocities of the dynamic obstacles and utilize the predefined shape models for the dynamic obstacles. Each shape model for an obstacle in the model database is defined with multiple bounding volume shapes and their initial poses.
For the input point cloud, we perform the object recognization at the beginning frame, then optimize $\mathbf{p}^*$ using the Ray-Constrained Iterative Closest Point~\cite{ganapathi12realtime} algorithm.

Given the predefined shape model for each obstacle, ICP algorithm computes the best approximating environment state $\mathbf{p}^*$ for the input point clouds $\mathbf d_1,..., \mathbf d_n$.
The likelihood of $\mathbf d_k$ for an environment state $\mathbf p$ is modeled as
\begin{equation}
P_{pc}(\mathbf d_k | \mathbf{p}) \propto \exp \left( {-} \frac{1}{2} \min_{i,j} \|S_{ij}-\mathbf d_k\| ^2 \right),
\label{eq:probability_model}
\end{equation}
and the optimal environment state $\mathbf{p}^*$ that maximizes the likelihood of the each point cloud is computed with two additional constraints, represented as $C_1$ and $C_2$:
\begin{align}
\begin{split}
&\mathbf p^* = \argmax_{\mathbf{p}} = \prod_k P_{pc}(\mathbf d_k | \mathbf{p}),  \\
\text{subject to} \:  C_1:& \forall (\mathbf p_{ij}, \mathbf p_{ik}) : (1 - \epsilon) \leq  \frac{||\mathbf{p}_{ij} - \mathbf{p}_{ih}||}{c_{dist}({{ij}, {ih})}}  \leq (1 + \epsilon) \\
C_2:& \forall \mathbf S_{ij} \forall \mathbf s_i  : \textrm{proj}_{\mathbf s_i} (\mathbf S_{ij}) \subset \textrm{proj}_{\mathbf s_i}(\mathbf d_1,...,\mathbf d_n),
\end{split}
\label{eq:maximization},
\end{align}
where ${c_{dist}({ij}, {ih})}$ is the distance between $\mathbf p_{ij}$ and $\mathbf p_{ih}$ of the predefined shape model, and $\textrm{proj}(\mathbf s_i)$ represents a projection to the 2D image space of depth sensor $\mathbf s_i$. Constraint $C_1$ corresponds to the length preserving constraint for the bounding volumes belong to the same object. $C_2$ ensures that the correct point clouds are generated for $\mathbf S_{ij}$ in view of all sensors $\mathbf s_i$.

\subsection{Belief State Estimation and Prediction}
\label{subsec:env_belief}

The optimal solution $\mathbf p^*$ computed in Section~\ref{subsec:env_state} can have erros due to the sensors (e.g., point-cloud sensors) or poor sampling. 
Furthermore, obstacle motion can be sudden or abrupt and this can result in various uncertainties in the prediction of future motion.

At each time $t$, we use the Kalman filter to estimate the position and velocity of the bounding volume $\mathbf S_{ij}$.
We estimate the current belief states $\mathbf b_t = (\mathbf p_t, \Sigma_t)$ from the history of observed environment states $\mathbf{p}^*$, and then also predict the future state of the environment that is used for probabilistic collision checking.
Its state at time $t$ is represented as
\begin{align}
(\mathbf{x}_{ij})_t = \begin{bmatrix} (\mathbf p_{ij})_t^T & (\mathbf{\dot{p}}_{ij})_t^T\end{bmatrix}^T,
\end{align}
where $(\mathbf p_{ij})_t$ is the position of $\mathbf S_{ij}$ at time $t$. We will omit subscript $_{ij}$ when we refer to a single obstacle.
Using the Kalman filter, we estimate $\mathbf{x}_t$ as
\begin{align}
\mathbf{x}_t &= \mathbf{A} \mathbf{x}_{t-1} + \mathbf{B} \mathbf{u}_t + \mathbf{w}_t, \label{eq:KF_predict} \\
\mathbf{z}_t &= \mathbf{C} \mathbf{x}_t + \mathbf v_t, \label{eq:KF_udpate}
\end{align}
where the matrices are defined  as
\begin{equation}
\mathbf A = \begin{bmatrix}
I _{3\times3} & \Delta tI _{3\times3}\\ 
0 & I _{3\times3}
\end{bmatrix},
\mathbf{B} = 
\begin{bmatrix}
I _{3\times3} \\ 
\Delta tI _{3\times3}
\end{bmatrix},
\mathbf{C} = 
\begin{bmatrix}
I _{3\times3} & 0
\end{bmatrix},
\end{equation}
and $\mathbf w_t$ and $\mathbf v_t$ are the process noise and observation noise, respectively.
$\mathbf{z}_t$ is an observation that corresponds to $\mathbf p^*$.

Although we cannot directly control the environment, we compute an hypothetical input $\mathbf{u}_t$ that is used to preserve the distances between the bounding volumes belong to the same object in the predicted result. 
During the estimation, if the distance of an object $\mathbf S_{ij}$ from another object $\mathbf S_{ih}$ exceeds the distance in the predefined shape model, we compute an appropriate value for $\mathbf{u}_t$ to preserve the initial distance.
In order to preserve the initial distance $\|(\mathbf p_{ij})_0-(\mathbf p_{ih})_0\|$, we pull the $\mathbf S_{ij}$'s position $(\mathbf p_{ij})_t$ toward $\mathbf S_{ih}$'s position $(\mathbf p_{ih})_t$ using
\begin{equation}
\label{eq:length_control}
\mathbf{u}_{t} = \left( (\mathbf{p}_{ih})_t - (\mathbf{p}_{ij})_t \right) \left( 1 - \frac{\|(\mathbf p_{ij})_0-(\mathbf p_{ih})_0\|}{\|(\mathbf p_{ij})_t-(\mathbf p_{ih})_t\|} \right) .
\end{equation}

\subsection{Spatial and Temporal Uncertainties in Belief State}
\label{subsec:uncertainties}

\begin{figure}[ht]
  \centering
  \subfloat[]
  {
    \includegraphics[width=0.22\linewidth]{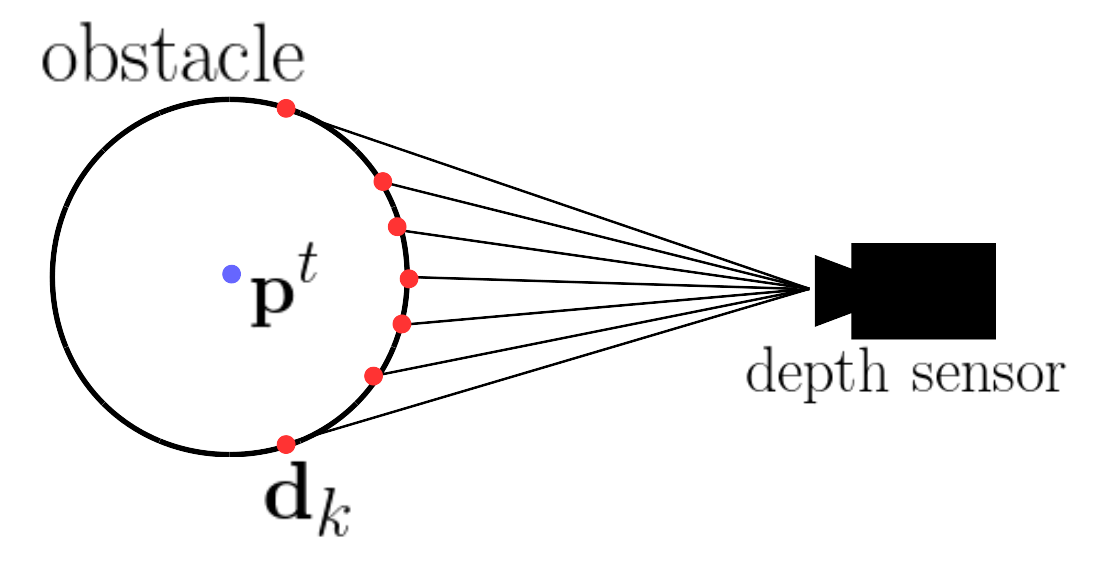}
  }
  \subfloat[]
  {    
    \includegraphics[width=0.22\linewidth]{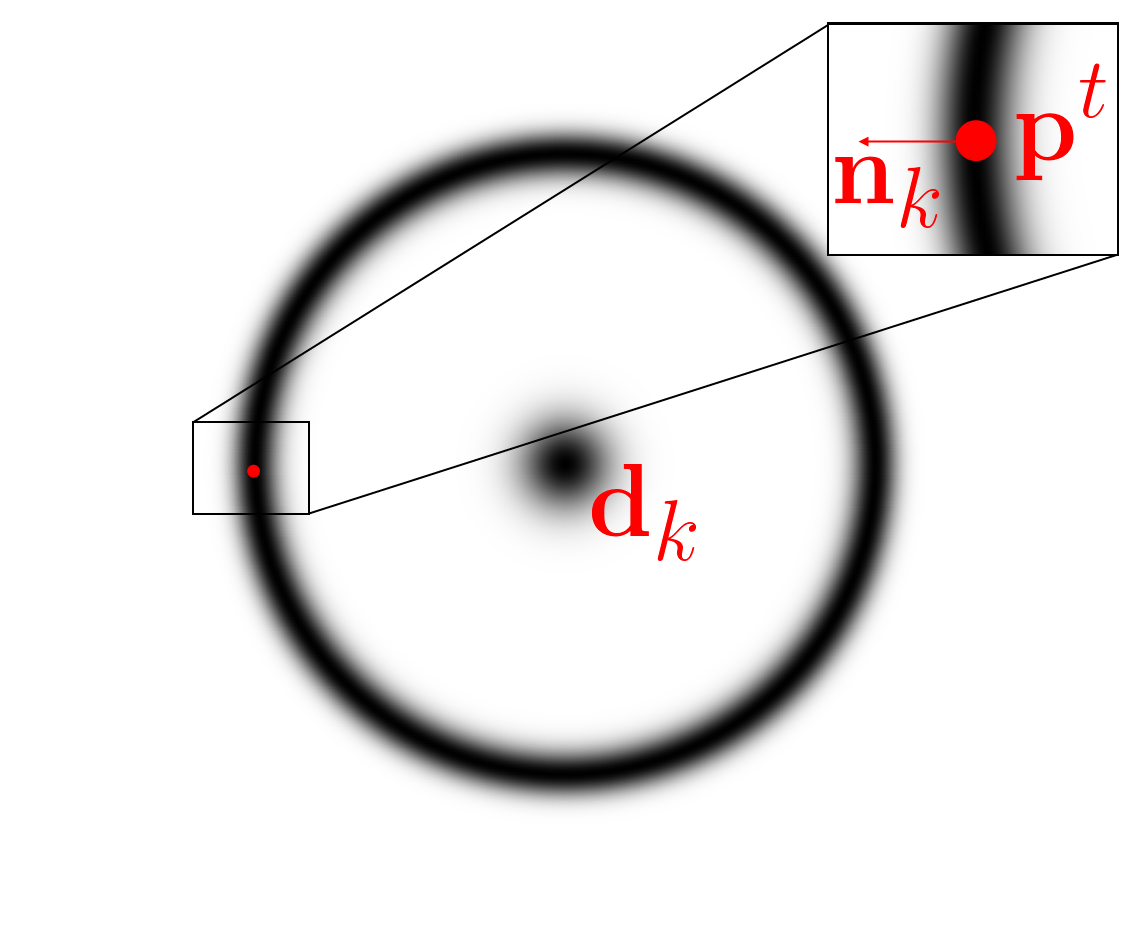}
    \label{fig:sensor_error_single}
  }
  \subfloat[]
  {
    \includegraphics[width=0.22\linewidth]{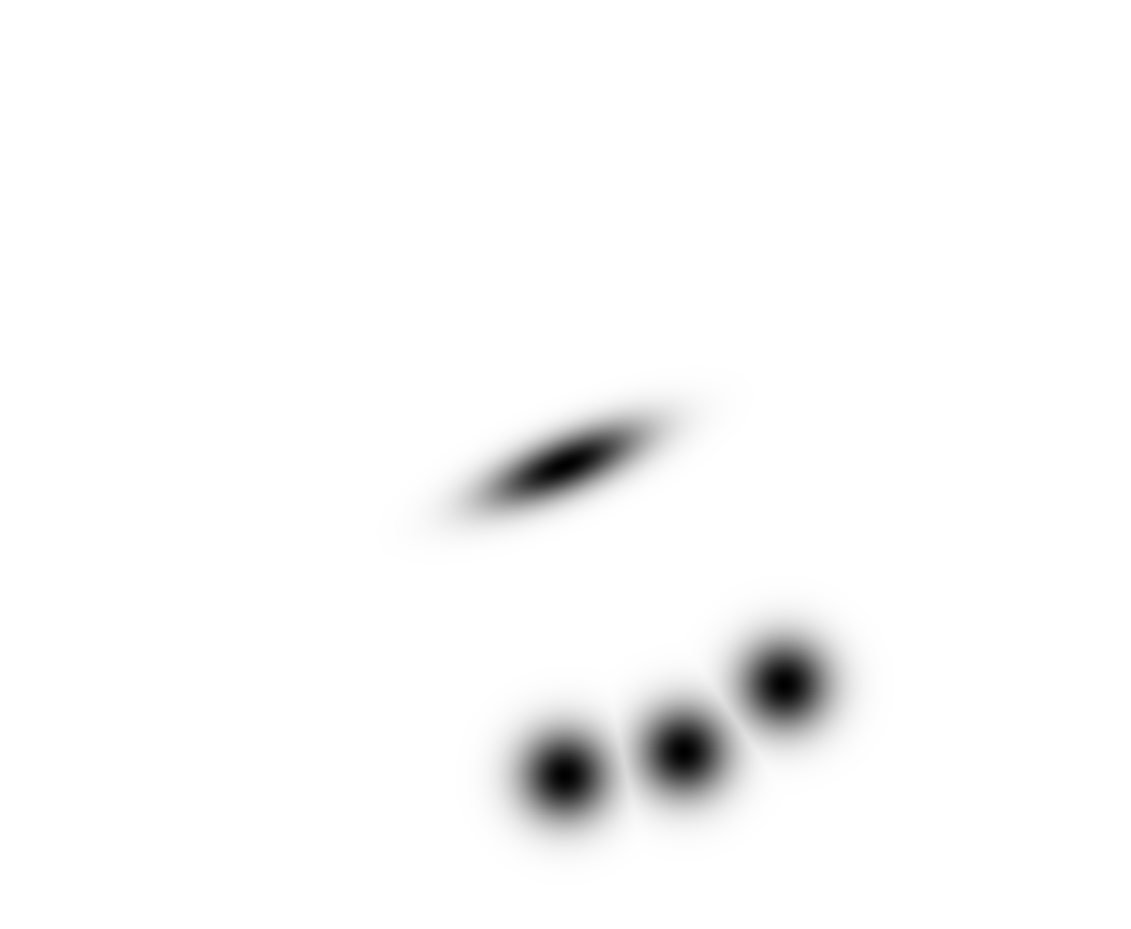}
  }
  \subfloat[]
  {
    \includegraphics[width=0.22\linewidth]{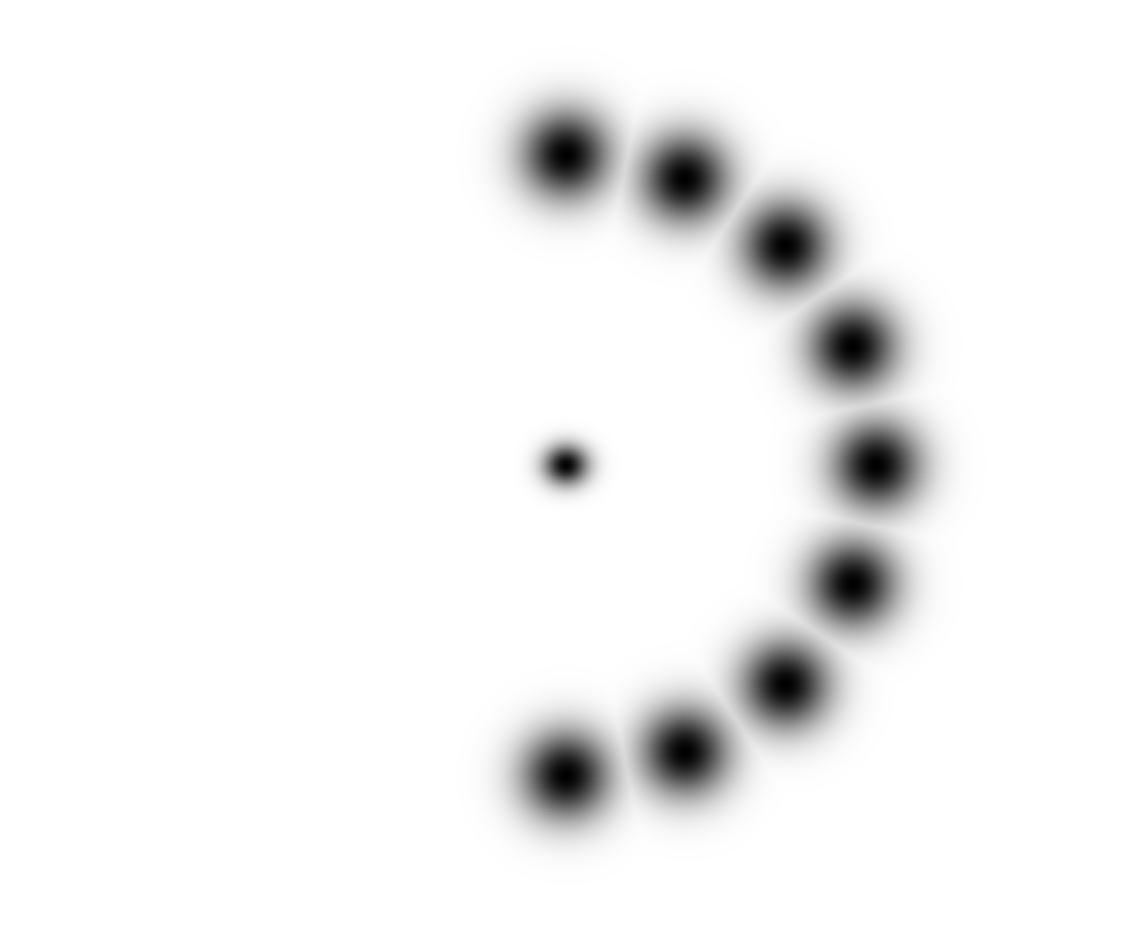}
  }
  \caption{{\bf Spatial uncertainty:} 
           (a) Sphere obstacle and its point cloud samples from a depth sensor.
           (b) Probability distribution of a sphere center state $\mathbf p$ for a single point cloud $\mathbf d_k$.
           (c) Probability distribution of $\mathbf p$ for a partially visible obstacle.
           (d) Probability distribution of $\mathbf p$ for a fully visible obstacle.
           }
\label{fig:sensor_error}           
\end{figure}
During the environment state estimation, spatial uncertainty or errors arise from the  resolution of the sensor. It is known that the depth sensor error can be modeled as Gaussian distributions around each point $\mathbf d_k$~\cite{nguyen2012modeling}. We assume that the center of distribution is $\mathbf d_k$ itself and the covariance is isotropic and can be represented as $\sigma_{s}^2 I_{3\times3}$. Due to the sensor error, the optimal environment state $\mathbf p^*$ computed from (\ref{eq:maximization}) may differ from the true environment state $\mathbf p^{t}$.

We derive the equation for the observation noise $\mathbf v_t$ in (\ref{eq:KF_udpate}) for an environment state computed using (\ref{eq:maximization}). For simplicity, we assume the environment has only one sphere with radius $r$ and its optimal state is computed from point clouds (Fig.~\ref{fig:sensor_error}(a)). For a single obstacle case, the optimization equation (\ref{eq:maximization}) can be written as
\begin{align}
P(\mathbf p) \propto \max_{\mathbf{p}} \quad & \prod_k \exp \left( {-} \frac{1}{2} \left( ||\mathbf{p} - \mathbf{d}_k|| - r \right)^2 \right) \notag \\
= &\prod_k P( \mathbf{p} | \mathbf{d}_k ). \label{eq:maximization2}
\end{align}
Here, $P( \mathbf{p} | \mathbf{d}_k )$ corresponds to the spherical probability distribution that represents the highest value at distance $r$.  
If $r \gg \sigma_{s}$, it can be approximated near $\mathbf{p}^{t}$ as a Gaussian distribution as shown in Fig.~\ref{fig:sensor_error}(b),
\begin{equation}
\label{eq:noise_approx}
P( \mathbf{p} | \mathbf{d}_k ) \sim \mathcal{N}( \mathbf{p}^{t}, \sigma_{s}^2 \mathbf{n}^{t} \times (\mathbf{n}^{t}) ^T ),
\end{equation}
where $\mathbf{n}_k = (\mathbf{p}^t - \mathbf{d}_k) / ||\mathbf{p}^t - \mathbf{d}_k||$.

$P(\mathbf p)$ is a product of these spherical probability distributions (\ref{eq:noise_approx}) for different point cloud $\mathbf d_k$, and it corresponds to another Gaussian distribution $\mathcal{N}( \mathbf{p}_t, \Sigma^* )$. 
Therefore, the observation error $\mathbf{v}_t$ can be represented as:
\begin{align}
\mathbf{v}_t \sim P(\mathbf p) - \mathbf p^t = \mathcal{N}( \mathbf{0}, \Sigma^* ).
\end{align}
If we are given more samples from the sensor and there is less sensor error, the  error distribution becomes more centralized.

Temporal uncertainty arises due to discretization of the time domain, which corresponds to approximating the velocity of dynamic obstacle using forward differencing method. 
Let $\mathbf{x}(t)$ be the obstacle position at time $t$. By Taylor expansion, we obtain
\begin{equation}
\mathbf{x}(t + \Delta t) = \mathbf{x}(t) + \dot{\mathbf{x}}(t) \Delta t + \frac{1}{2} \ddot{\mathbf{x}}(t) \Delta t^2 + O(\Delta t^3),
\end{equation}
and 
\begin{equation}
\dot{\mathbf{x}}(t) \approx \frac{\mathbf{x}(t + \Delta t) - \mathbf{x}(t)}{\Delta t} + \frac{1}{2} \ddot{\mathbf{x}}(t) \Delta t + O(\Delta t^2). \label{eq:taylor_expansion}
\end{equation}
From the history of past environment states, we compute $\ddot{\mathbf{x}}(t)$ of each object and its covariance $\Sigma_a (t)$.
Based on Equation (\ref{eq:taylor_expansion}), we get  the process error $\mathbf w_t$ as
\begin{equation}
\mathbf w_t \sim \mathcal{N} \left( \mathbf{0},
\left[ \begin{array}{c|c} \frac{1}{4} (\Delta t)^4 \Sigma_a(t) & \mathbf{0} \\ \hline \mathbf{0} & \frac{1}{4} (\Delta t)^2 \Sigma_a(t) \end{array} \right]
\right),
\end{equation}
which is used in our estimation framework (Section~\ref{subsec:env_belief}) to compute the environment belief states.
These estimated belief states are used for collision probability computation (Section~\ref{subsec:pcc}).

\section{Space-Time Trajectory Optimization}
\label{sec:optimization}

In this section, we describe our motion planning algorithm based on probabilistic collision detection (Section~\ref{sec:pcc}) and environment belief state estimation (Section~\ref{sec:environment}).

Fig.~\ref{fig:architecture} highlights various components of our planning algorithm. The pseudo-code description is given in Algorithm 1 for a single planning step $\delta T$.

\begin{figure}[t]
  \centering
  \includegraphics[trim=0in 0in 0in 0in, clip=true, width=0.9\linewidth]{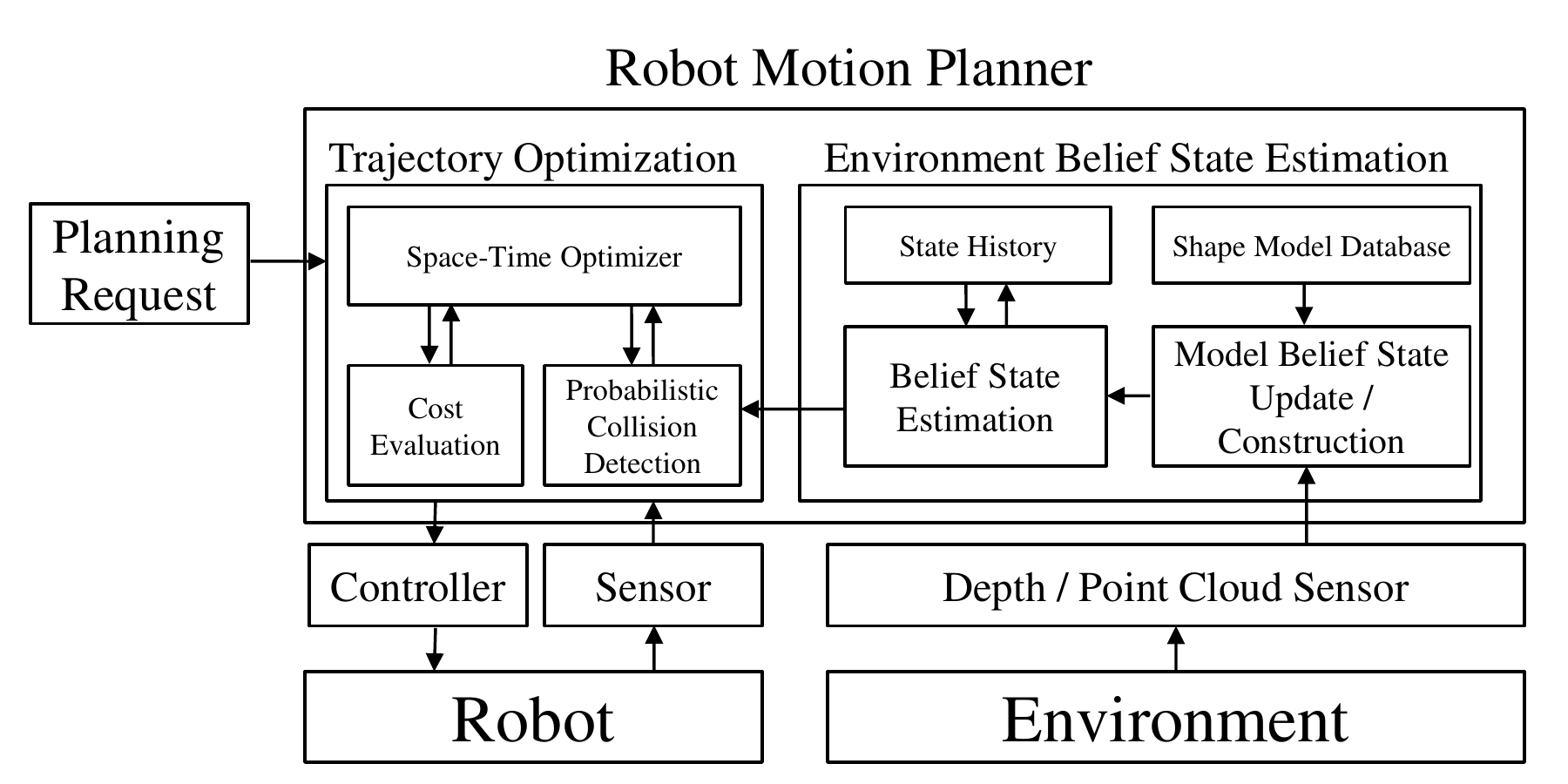}
  \caption{{\bf Trajectory Planning:} We highlight various components of our algorithm. These include belief space estimation from the sensor data and environment description, probabilistic collision checking, and trajectory optimization.}
  \label{fig:architecture}
\end{figure}

\begin{algorithm}[t]
 \caption{$\mathbf Q^*=$PlanWithEnvUncertainty($\mathbf Q$, $\{\mathbf d_k\}, t_i$) \\: Compute the optimal robot trajectory $\mathbf Q^*$ during the planning step $\Delta T$ for the environment point clouds $\{\mathbf d_k\}$ at time $t_i$}
  \label{alg:pseudo}
  \begin{algorithmic}[1]
    \REQUIRE initial trajectory $\mathbf Q$, environment point clouds $\{\mathbf d\}$, time $t_i$
    \ENSURE Optimal robot trajectory $\mathbf Q^*$ for time step $\Delta T$
    \STATE $\mathbf p_i$ = EnvironmentStateComputation($\{\mathbf d_k\}$) // {\em compute the environment state of dynamic obstacles}
    \FOR {$k \in \{i, ..., i + \Delta T\}$}
    \STATE $\mathbf B_k$ = BeliefStateEstimation($\mathbf B_0, ..., \mathbf B_{k-1}$, $\mathbf p_i$) //{\em estimate the current and future belief states}
    \ENDFOR
    \WHILE {elapsed time $<\Delta T$}
    \STATE $P$=ProbCollisionChecking($\mathbf Q,\{\mathbf B_i,...,\mathbf B_{i+\Delta T}\}$) // {\em perform probabilistic collision detection}
    \STATE $\mathbf Q^*$=Optimize($\mathbf Q,P$) // {\em compute the optimal trajectory for high-DOF robot}
    \ENDWHILE
  \end{algorithmic}
\end{algorithm}

As described in Section.~\ref{sec:environment}, we construct or update the belief state of the environment $\mathbf b = (\mathbf p, \Sigma)$, which is the probability distribution of the poses of the existing bounding volumes. Then we predict the future belief state of the environment.

We define the time-space domain $\mathcal X$, which adds a time dimension to the configuration space, i.e., $\mathcal X = \mathcal C \times T$.
The robot's trajectory, $\mathbf q(t)$, is represented as a function of time from the start configuration $\mathbf q_s$ to the goal configuration $\mathbf q_g$. It is represented using the matrix $\mathbf Q$,
\begin{equation}
\label{eq:x}
\mathbf Q=\begin{bmatrix} \mathbf q_s &  \mathbf q_1&...&  \mathbf q_{n-1}  &   \mathbf q_g\\t_0 & t_1&...&t_{n-1}&t_n\end{bmatrix},
\end{equation}
 which corresponds to $n+1$ configurations at discretized keyframes,  $t_i=i \Delta_T$, which have a fixed interval $\Delta_T$. We denote the $i$-th column of $\mathbf Q$ as $\mathbf x_i=\begin{bmatrix}\mathbf q_i^T & t_i \end{bmatrix}^T$.

Given the initial and goal positions for motion planning, our planner computes a locally optimal trajectory based on the objective function defined for the duration of the trajectory and also based on other constraints (e.g., smoothness). We use incremental trajectory optimization, which repeatedly refines a motion trajectory using an optimization formulation~\cite{Park:2012:ICAPS}.
The planner initializes the robot trajectory $\mathbf Q$ as a smooth trajectory of predefined length $T$ between the start and goal configurations $\mathbf q_s$ and $\mathbf q_g$, i.e.,
\begin{align}
\label{eq:traj_init}
\mathbf Q=\argmin_{Q}\sum_{i=1}^{n-1}\|\mathbf q_{i-1}-2\mathbf q_i+\mathbf q_{i+1}\|^2.
\end{align}
The trajectory is refined during every planning step $\Delta T$ based on various constraints, and we add collision probability constraints which is based on the probabilistic collision described in Section~\ref{sec:pcc} as the collision-free constraints.

We define the collision probability constraint of feasible robot trajectories based on the following probability computation formulation (shown as $P()$): 
\begin{equation}
\label{eq:pcol}
\begin{split}
\forall \mathbf x_i  : P(\mathbf q_i \in \mathcal{C}_{obs}(t_i)) < 1 - \delta_{CL}.
\end{split}
\end{equation}
The computed trajectories that satisfy (\ref{eq:pcol}) guarantee that the probability of collision with the obstacles is bounded by the confidence level $\delta_{CL}$, i.e. the probability that a computed trajectory has no collision is higher than $\delta_{CL}$. Use of a higher confidence level computes safer, but more conservative trajectories. The use of a lower confidence level increases the success rate of planning, but also increases the probability of collision. 

The objective function for trajectory optimization at time $t_k$ can be expressed as the sum of trajectory smoothness cost, and collision constraint costs for dynamic uncertain obstacles and static known obstacles,
\begin{small}
\begin{equation}
\label{eq:opt}
\begin{split}
f(\mathbf Q)=\min_{Q}&\sum_{i=k+m}^{n}\left(\|\mathbf q_{i-1}-2\mathbf q_i+\mathbf q_{i+1}\|^2+C_{static}(\mathbf Q_i) \right)\\
+&\sum_{i=k+m}^{k+2m} \textrm{max}(P(\mathbf q_i \in \mathcal C_{obs}(\mathbf x_i))-(1-\delta_{CL}),0),
\end{split}
\end{equation}
\end{small}
where $m$ is the number of time steps in a planning time step $\Delta T$. Furthermore, we can add additional kinematic or dynamic constraints that the robot has to satisfy, such as bounds on the joint position, velocity limits or robot balancing constraints. These can be satisfied in the trajectory optimization framework by formulating them as a constraint optimization problem, with these specific constraints.

Unlike the previous optimization-based planning approaches~\cite{Park:2012:ICAPS,Zucker:IJRR:2012} which maintain and cannot change the predefined trajectory duration for the computed trajectory, our planner can adjust the duration of trajectory $T$ to avoid collisions with the dynamic obstacles.
When the trajectory planning starts from $\mathbf t_i$ ($\mathbf t_i$ can be different from $\mathbf t_s$ due to replanning) and if the computed trajectory $\mathbf Q$ violates the collision probability constraint (\ref{eq:pcol}) at time $j$, i.e., $P(\mathbf q_j \in \mathcal C_{obs}(t_j)) \ge \delta_{CL}$, we repeatedly add a new time step $\mathbf x_{new}$ before $\mathbf x_{j}$ and rescale the trajectory from $\left[\mathbf t_i,...,\mathbf t_{j-1}\right]$ to $\left[\mathbf t_i,...,\mathbf t_{j-1}, \mathbf t_{new}\right]$, until $\mathbf x_{new}$ is collision-free. Moreover, the next planning step starts from $\mathbf x_{new}$.
This formulation of adjusting the trajectory duration allows the planner to slow the robot down when it cannot find a safe trajectory for the previous trajectory duration due to the dynamic obstacles. 
The optimization problem in (\ref{eq:opt}) is solved using Covariant Hamiltonian trajectory optimization~\cite{Zucker:IJRR:2012}, which preserves the trajectory smoothness during  optimization.

If the optimization algorithm converges, our algorithm computes the  optimal trajectory,
\begin{align}
\mathbf Q^*=\argmin_{\mathbf Q}f(\mathbf Q),
\end{align}
which provides a probabilistic collision guarantee with the given confidence level $\delta_{CL}$, for the time step $\Delta T$.

\section{Results}
\label{sec:result}
In this section, we describe our implementation and highlight the performance of our planning algorithm on different benchmark scenarios.

We tested our planning algorithm in simulated environments with models of a 6-DOF Universal Robot UR5 (Fig.~\ref{fig:experiment1}(a)(b)) and a 7-DOF KUKA IIWA robot arm (Fig.~\ref{fig:experiment1}(c)(d)). The environments have some complex static obstacles such as tools or furniture in a room.
The dynamic obstacle is a human, and we assume that the robot operates in close proximity to the human, however, the human does not intend to interact with the robot. 
We use a Kinect as the depth sensor, which can represent a human as 25-30k point clouds. We use a commodity PC for the planner, and use OpenMP to compute the probabilistic collision checking in parallel using multi-core CPUs.

\subsection{Experimental Results}

\begin{table*}[ht]
\centering
\resizebox{\linewidth}{!}{
\begin{tabular}{|c|c|c|c|c|c|c|}
\hline
Benchmark & Robot DOF & \begin{tabular}[x]{@{}c@{}}\# of Robot\\Bounding\\ Volumes\end{tabular}  & \begin{tabular}[x]{@{}c@{}}\# of Samples\\ in Point Cloud\end{tabular} & \begin{tabular}[x]{@{}c@{}}Environment \\ State DOF\end{tabular} & \begin{tabular}[x]{@{}c@{}}Confidence \\ Level\end{tabular}  & \begin{tabular}[x]{@{}c@{}}Average \\ Planning\\ Time\end{tabular}  \\ \hline
Prediction 1 & 6 (UR5) & 30 & 33,000 & 336 & $0.95$ & 138.83 ms \\ \hline
Prediction 2 & 6 (UR5) & 30 & 29,500 & 336 & $0.95$ & 115.55 ms \\ \hline
\begin{tabular}[x]{@{}c@{}}Time-Space\\Search 1\end{tabular} & 7 (IIWA) & 35 & 35,000 & 336 & $0.95$ & 771.44 ms \\ \hline
\begin{tabular}[x]{@{}c@{}}Time-Space\\Search 2\end{tabular} & 7 (IIWA) & 35 & 35,500 & 336 & $0.95$ & 552.64 ms \\ \hline
\begin{tabular}[x]{@{}c@{}}Exact\\Collision\\Checking\end{tabular} & 7 (IIWA) & 35 & 35,000 & 336 & $1.0$ & 154.99 ms \\ \hline
Sensing Noise 1 & 7 (IIWA) & 35 & 35,000 & 336 & $0.95$ & 720.52 ms \\ \hline
Sensing Noise 2 & 7 (IIWA) & 35 & 35,000 & 336 & $0.99$ & 846.11 ms \\ \hline
\end{tabular}
}
\caption{{\bf Complexity and planning results in our benchmarks:} We use two different robot models UR5 and IIWA, in our benchmarks. We highlight the complexity of each benchmark in terms of robot bounding volumes, the number of point cloud samples, DOF of the environment state, and the confidence level used for probabilistic collision detection. We compute the average planning time for each benchmark on a multi-core CPU.}
\label{table:performance}
\end{table*}

\begin{figure}[t]
  \centering
  \subfloat
  {
    \includegraphics[width=0.22\linewidth]{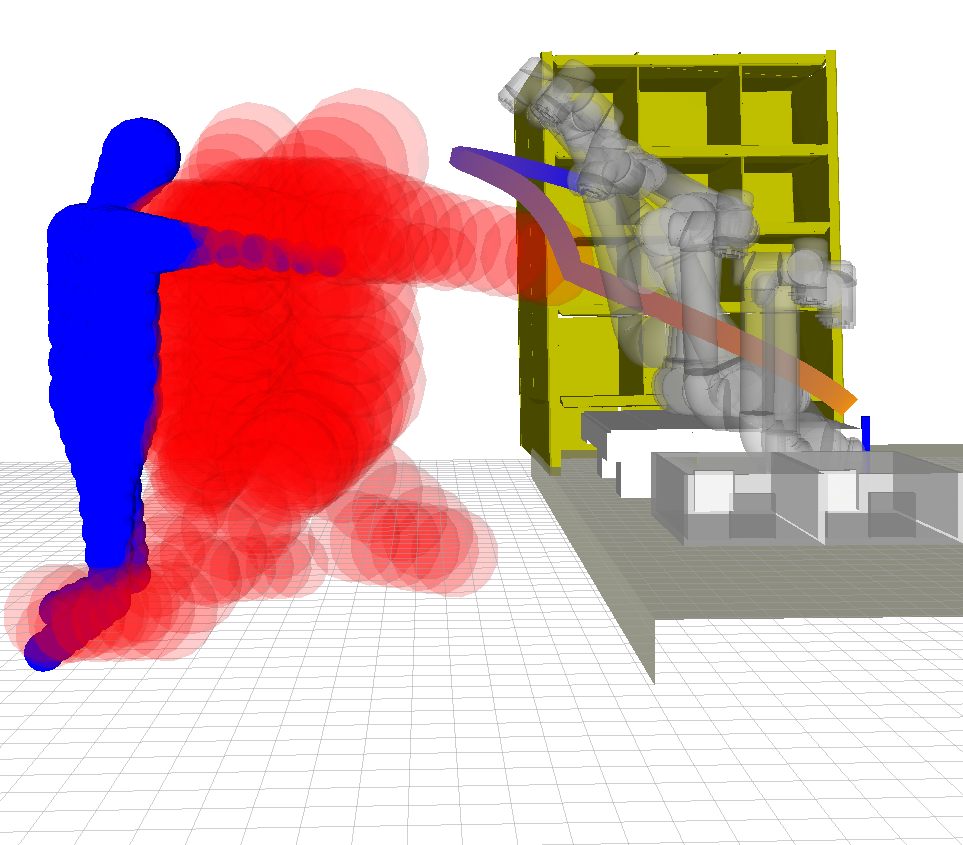}
  }
  \subfloat
  {
    \includegraphics[width=0.22\linewidth]{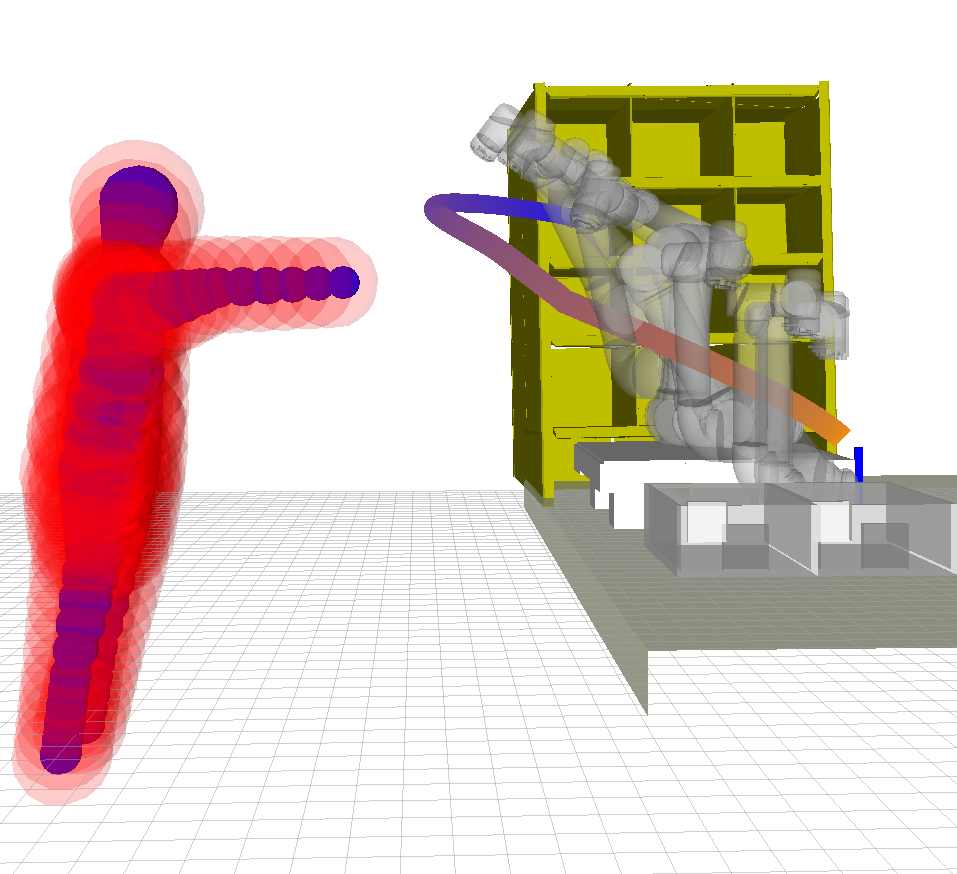}
  }
  \subfloat
  {
    \includegraphics[width=0.22\linewidth]{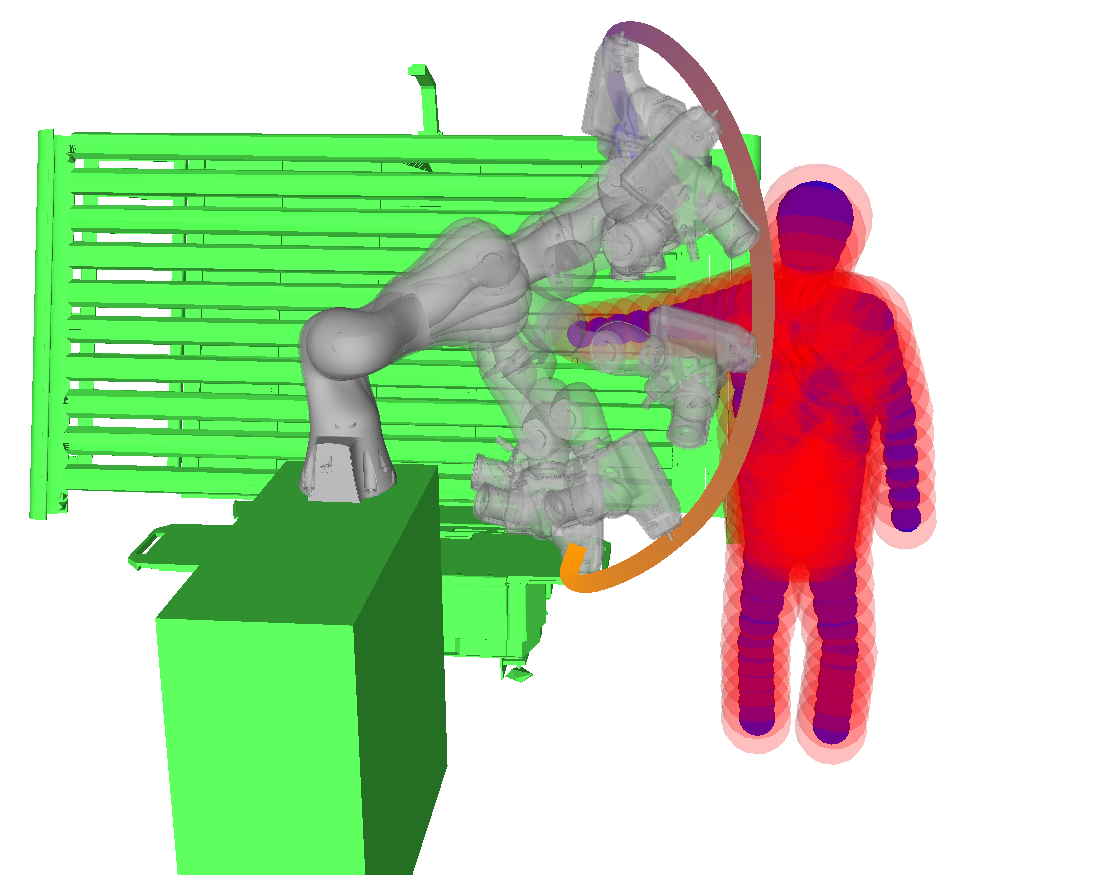}
  }
  \subfloat
  {
    \includegraphics[width=0.22\linewidth]{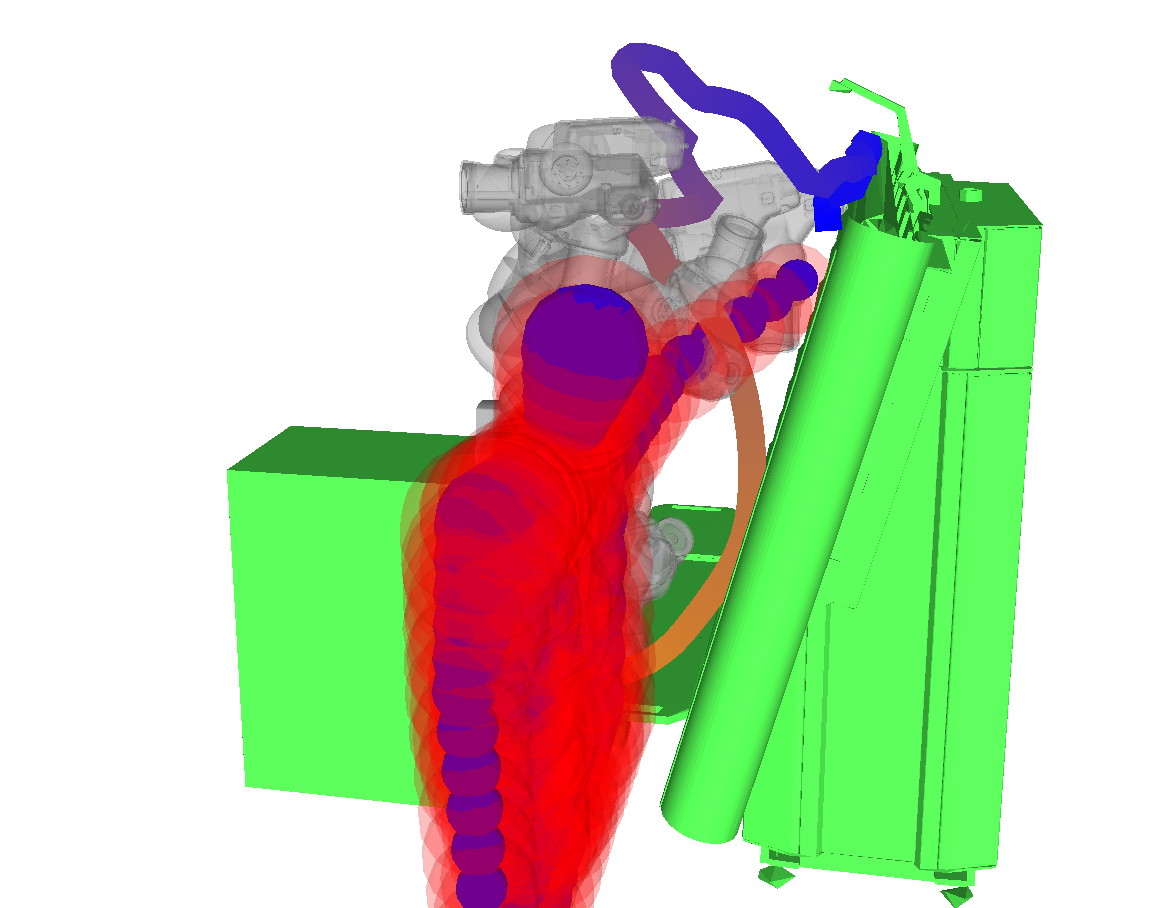}
  }
  \caption{{\bf Robot Trajectory with Dynamic Human Obstacles:} Static obstacles are shown in green, the estimated current and future human bounding volumes are shown in blue and red, respectively.
           (a) When a human is approaching the 6-DOF robot arm (UR5), our planner changes its trajectory to avoid potential future collisions.
           (b) When a standing human only stretchs out an arm, our shape model-based prediction prevents unnecessary reactive motions, which results a better robot trajectory than the prediction using simple extrapolations.
           (c) A collision-free computed trajectory that avoids collisions with the obstacle and the environment.
           (d) The robot adjusts its speed or waits if there is no feasible path to the goal position due to the dynamic obstacles.}
\label{fig:experiment1}
\end{figure}

Table~\ref{table:performance} presents the complexity of the benchmarks and the performance of our planning results. 
Our first benchmark tests our shape model-based environment belief state prediction.
When a human is dashing onto the robot at a fast speed, the robot is aware of the potential collision with the predicted future human position and changes its trajectory (Fig.~\ref{fig:experiment1}(a)). 
However, if a human in standing only stretchs out an arm toward the robot, even if the velocity of the arm is fast, the model-based prediction prevents unnecessary reactive motions, which is different from the prediction models with constant velocity or acceleration extrapolations (Fig.~\ref{fig:experiment1}(b)).

The second benchmark set shows our planner's collision-free path computation in the space-time domain. The planner computes a collision-free trajectory that avoids collision with the obstacles and the environments (Fig.~\ref{fig:experiment1}(c)). If there is no feasible solution due to dynamic obstacles, the planner adjusts its speed or waits until it finds a solution (Fig.~\ref{fig:experiment1}(d)).


\begin{figure}[ht]
  \centering
  \subfloat
  {
    \includegraphics[width=0.3\linewidth]{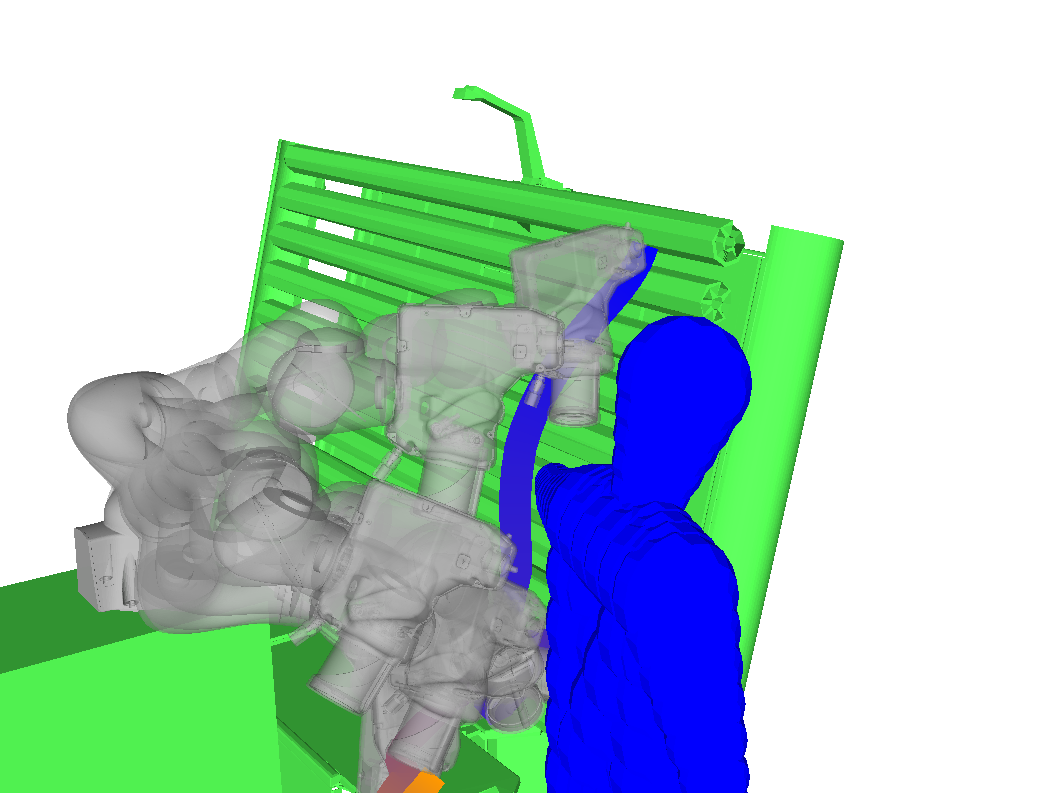}
  }
  \subfloat
  {
    \includegraphics[width=0.3\linewidth]{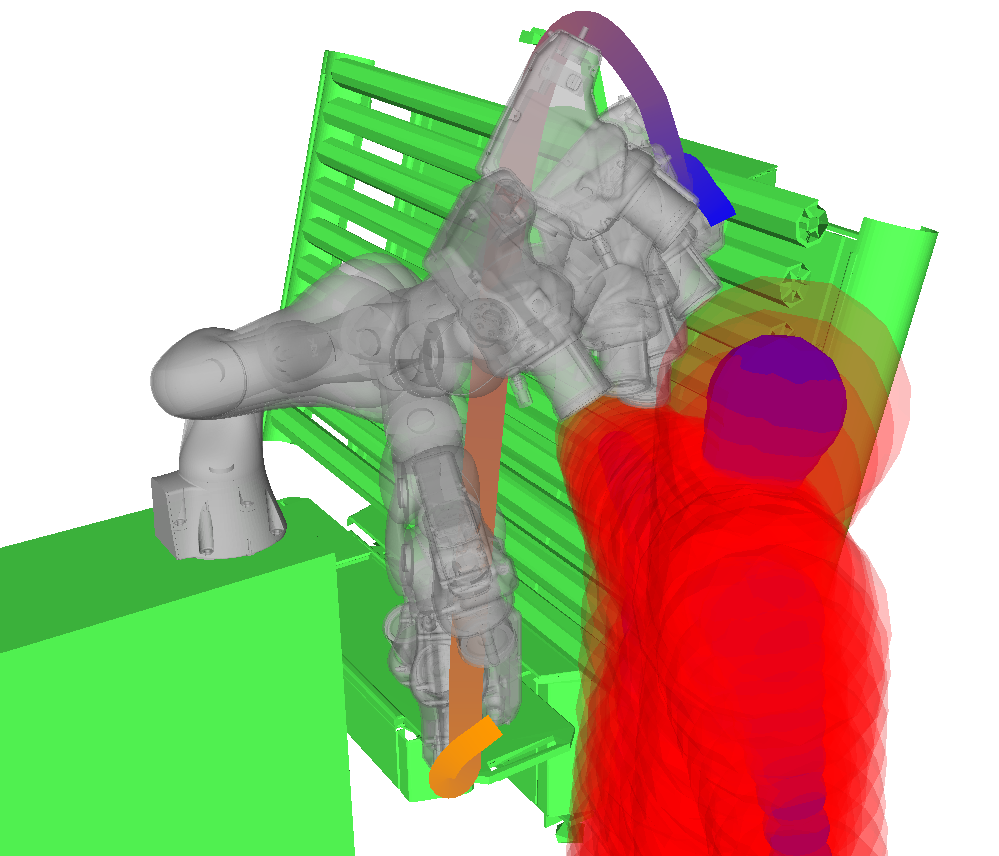}
  }
  \subfloat
  {
    \includegraphics[width=0.3\linewidth]{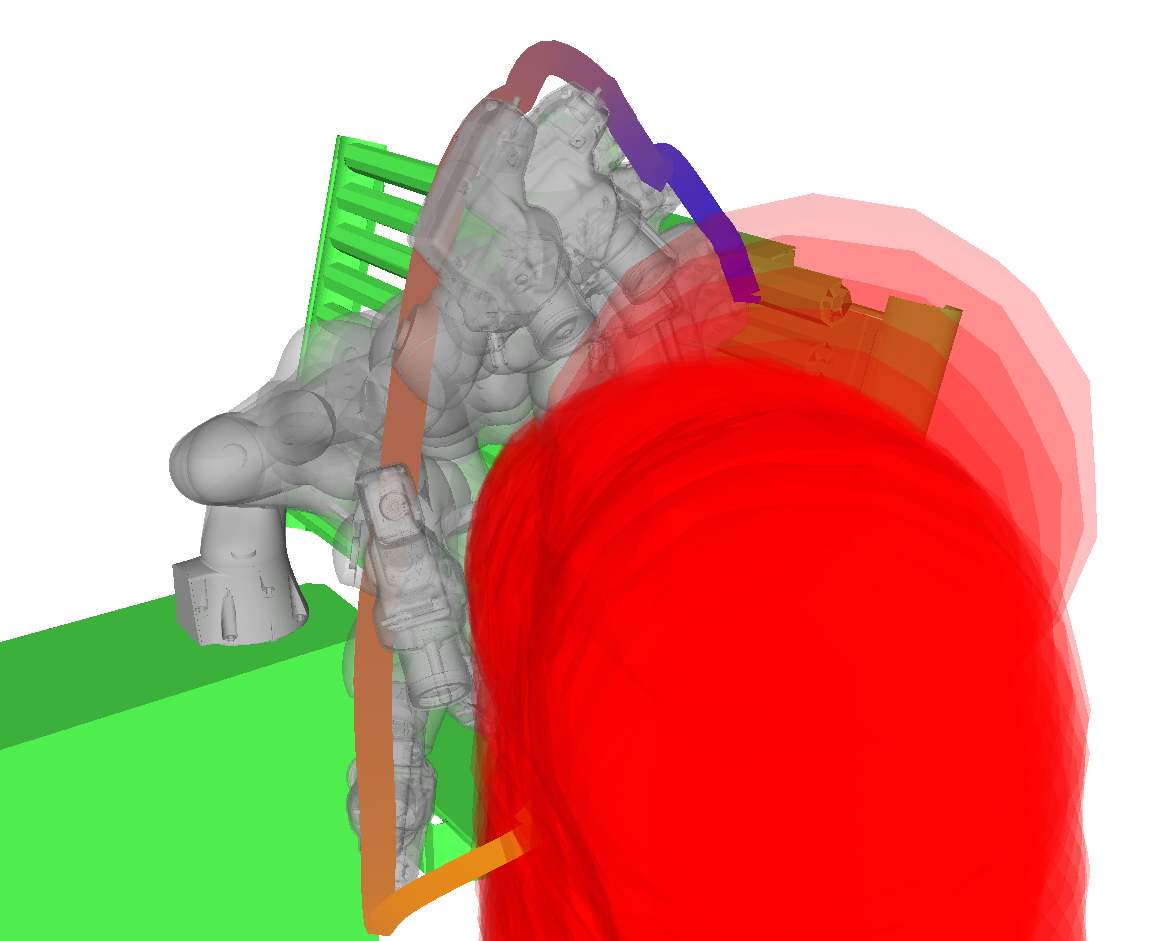}
  }
  \caption{{\bf Robot Trajectory with Different Confidence and Noise Levels:} 
           (a) A trajectory with exact collision checking for zero-noise obstacles.
           (b) A trajectory with $\delta_{CL}=0.95$ and $\mathbf v_t=0.005 I_{3 \times 3}$.
           (c) A trajectory with $\delta_{CL}=0.99$ and $\mathbf v_t=0.05 I_{3 \times 3}$.
           }
\label{fig:experiment2}           
\end{figure}

Fig.~\ref{fig:experiment2} shows a robot trajectory with different confidence levels and sensor noises. If the obstacle states are assumed as exact, the robot can follow the shortest and smoothest trajectory that is close to the obstacle (Fig.~\ref{fig:experiment2}(a)). However, as the noise of the environment state or expected confidence level becomes higher, the computed robot trajectories become longer and less smooth to avoid potential collision with the obstacles (Fig.~\ref{fig:experiment2}(b)-(c)).

\subsection{Comparison with Other Algorithms}

\begin{table}[ht]
\centering
\begin{tabular}{|c|c|c|c|}
\hline
Algorithms &  \begin{tabular}[x]{@{}c@{}}Number of\\Collisions\end{tabular} & \begin{tabular}[x]{@{}c@{}}Trajectory\\Duration (sec)\end{tabular}   &  \begin{tabular}[x]{@{}c@{}}Trajectory\\Length (m)\end{tabular}\\ \hline
\begin{tabular}[x]{@{}c@{}}Enlarged bounding\\ volumes~\cite{van2012lqg,Park:2012:ICAPS}\end{tabular} &  0.02 &  10.47 & 2.32\\ \hline
\begin{tabular}[x]{@{}c@{}}Approximation using\\ the center point PDF~\cite{du2011probabilistic}\end{tabular}  & 0.43 & 6.62 & 1.52\\ \hline
Our approach &  0.03 & 7.16 & 1.87 \\ \hline
\end{tabular}
\caption{{\bf Planning results of different probabilistic collision detection algorithms:} Our probabilistic collision detection approach shows a high safety as the approach using enlarged bounding volumes, while computes efficient trajectories.}
\label{table:performance2}
\end{table}

In order to compare our algorithm with other probabilistic collision detection algorithms, we plan trajectories using the different probabilistic collision detection algorithms. 
We choose different initial and goal configurations for each trials, and compute trajectories with $\delta_{CL}=0.99$. The trajectory durations are initialized to 5 seconds.
We applied the same recorded human motion that stretches an arm that blocks the initial robot trajectory to the planning, but each trial has a different small perturbation of the human obstacle position that corresponds to the environment uncertainties.
We measure the number of collisions, as well as the durations and lengths of the computed trajectories for planners with three different probabilistic collision detection algorithms. The averages of 100 trials are shown in Table~\ref{table:performance2}. The enlarged bounding volumes have less collisions, but the durations and lengths of the computed trajectories are longer than other approaches, since the overestimated collision probability makes the planner to compute trajectories which are unnecessarily far apart from the obstacles, or to wait when there is a feasible trajectory.
On the other hand, the approximating approach that uses the probability of the object center point underestimate the collision probability and causes several collisions in the planned trajectories.
Our approach shows a similar safety with the approach using enlarged bounding volumes, while it also computes efficient trajectories that have shorter trajectory durations and lengths.

\section{Conclusions and Future Work}
We present a novel algorithm for trajectory planning for high-DOF robots in dynamic, uncertain environments. This include new methods for belief space estimation and probabilistic collision detection. Our approach is quite fast, and works well in our simulated results where it can compute collision-free paths with high confidence level.  Our probabilistic collision detection computes tighter upper bounds of the collision probability as compared to prior approaches. We highlight the performance of our planner on different benchmarks with human obstacles for two robot models. To the best of our knowledge, that can handle high-DOF robots in dynamic environment with imperfect obstacle representations.

Our approach has some limitations. Some of the assumptions used in belief space estimation in terms of Gaussian distribution and Kalman filter may not hold. Moreover, we may have a pre-defined shape representation of the obstacle. The trajectory optimization may get stuck at a local minima and may not converge to a global optimal solution.
 There are many avenues for future work. Our approach only takes into account the imperfect information about the moving obstacles. In particular, we assume that a good point-cloud sample of the obstacles is given for probabilistic collision checking.  Our current approach does not account for control errors or sensor errors, which are rather common with the controllers and sensors.

\bibliographystyle{splncs03}
\bibliography{rss16}

\end{document}